\documentclass[sigconf]{acmart}
\usepackage{graphicx}
\usepackage{amsmath}
\usepackage{amsthm}
\usepackage{amsfonts}
\usepackage{bm}
\usepackage{caption}
\usepackage{multirow}
\usepackage[english]{babel}
\usepackage{adjustbox}
\usepackage{subfigure}
\usepackage{booktabs}
\usepackage{array}
\usepackage{algorithm}
\usepackage{algorithmic}

\theoremstyle{definition}

\newtheorem{theorem}{Theorem}
\newtheorem{lem}{Lemma}
\newtheorem{corollary}{Corollary}

\AtBeginDocument{%
  \providecommand\BibTeX{{%
    \normalfont B\kern-0.5em{\scshape i\kern-0.25em b}\kern-0.8em\TeX}}}

\copyrightyear{2021}
\acmYear{2021}
\acmConference[WWW '21]{Proceedings of the Web Conference 2021}{April 19--23, 2021}{Ljubljana, Slovenia}
\acmBooktitle{Proceedings of the Web Conference 2021 (WWW '21), April 19--23, 2021, Ljubljana, Slovenia}
\acmPrice{}
\acmDOI{10.1145/3442381.3449951}
\acmISBN{978-1-4503-8312-7/21/04}
\settopmatter{printacmref=true}
\begin{document}
\fancyhead{}
\title{Theoretically Improving Graph Neural Networks via Anonymous Walk Graph Kernels}

\author{Qingqing Long}
\authornote{These authors contributed equally to the work.}
\affiliation{
  \institution{Key Laboratory of Machine Perception (Ministry of Education), Peking University, China}
}
 \email{qingqinglong@pku.edu.cn}
 
 \author{Yilun Jin}
 \authornotemark[1]
\affiliation{%
  \institution{The Hong Kong University of Science and Technology}
  \city{Hong Kong SAR, China}
  }
  \email{yilun.jin@connect.ust.hk}

\author{Yi Wu}
 \authornotemark[1]
\affiliation{
  \institution{Peking University, China}
}
 \email{reyn19990619@pku.edu.cn}

\author{Guojie Song}
\authornote{Corresponding Author.}
\affiliation{%
  \institution{Key Laboratory of Machine Perception (Ministry of Education), Peking University, China}
}
\email{gjsong@pku.edu.cn}

\begin{abstract}
Graph neural networks (GNNs) have achieved tremendous success in graph mining. However, the inability of GNNs to model substructures in graphs remains a significant drawback. Specifically, message-passing GNNs (MPGNNs), as the prevailing type of GNNs, have been theoretically shown unable to distinguish, detect or count many graph substructures. While efforts have been paid to complement the inability, existing works either rely on pre-defined substructure sets, thus being less flexible, or are lacking in theoretical insights. In this paper, we propose GSKN\footnote{Our codes are available at  https://github.com/YimiAChack/GSKN}, a GNN model with a theoretically stronger ability to distinguish graph structures. Specifically, we design GSKN based on anonymous walks (AWs), flexible substructure units, and derive it upon feature mappings of graph kernels (GKs). We theoretically show that GSKN provably extends the 1-WL test, and hence the maximally powerful MPGNNs from both graph-level and node-level viewpoints. Correspondingly, various experiments are leveraged to evaluate GSKN, where GSKN outperforms a wide range of baselines, 
endorsing the analysis. 
\end{abstract}

\begin{CCSXML}
<ccs2012>
<concept>
<concept_id>10002950.10003624.10003633</concept_id>
<concept_desc>Mathematics of computing~Graph theory</concept_desc>
<concept_significance>500</concept_significance>
</concept>
<concept>
<concept_id>10002951.10003227.10003233</concept_id>
<concept_desc>Information systems~Collaborative and social computing systems and tools</concept_desc>
<concept_significance>500</concept_significance>
</concept>
<concept>
<concept_id>10003033.10003083.10003090</concept_id>
<concept_desc>Networks~Network structure</concept_desc>
<concept_significance>500</concept_significance>
</ccs2012>
\end{CCSXML}

\ccsdesc[300]{Networks~Network structure}
\ccsdesc[300]{Information systems~Collaborative and social computing systems and tools}

\keywords{Graph Convolutional Network, 
Structural Patterns, Graph Kernels}
\maketitle

\section{Introduction}
Graph Neural Networks (GNNs) \cite{wu2020comprehensive,xu2018powerful} have achieved tremendous success in mining information networks, such as social networks, knowledge graphs, and biochemical networks \cite{duvenaud2015convolutional,niepert2016learning,kipf2016semi,hamilton2017inductive}. Underlying most widely studied GNNs \cite{velivckovic2017graph,kipf2016semi,hamilton2017inductive} lies the mechanism of \textit{message passing} \cite{gilmer2017neural,bayati2011dynamics}, or \textit{neighborhood aggregation} \cite{xu2018powerful}, where the representations for each node are updated by aggregating information from their neighbors. It is such a mechanism that enables efficient recursive computation of graph and node representations and promotes its tremendous success. 

However, message passing and the resultant message passing GNNs (MPGNNs), as the majority of GNNs, bear inherent drawbacks in capturing substructures of graphs. For example, \cite{xu2018powerful,morris2019weisfeiler} show that the capability of distinguishing different graphs of MPGNNs is upper-bounded by the 1-Weisfeiler-Lehman Isomorphism Test (1-WL Test), which is known unable to distinguish many common substructures (examples in \cite{sato2020survey}). In addition, \cite{chen2020can,arvind2020weisfeiler} show that MPGNNs are unable to count or detect substructures with 3 or more nodes. Both theoretical results uncover crucial limitations of MPGNNs, as graph substructures are widely recognized as indicative in various complex networks. For example, cliques, or community structures are common patterns in social networks \cite{granovetter1973strength}, and rings serve as indicators on functional groups in molecular chemistry \cite{liu2017surveying}, etc. Whether and how we can extend MPGNNs to account for substructures in complex graphs is thus an important open problem, posed by the contrast between the significance of substructures, and the inability of MPGNNs to model them. 

There have been several attempts aiming to improve MPGNNs for better modeling of substructures \cite{bouritsas2020improving,lee19-motif-attention,long2020graph,JinSS20}. However, \cite{lee19-motif-attention,bouritsas2020improving} design their solutions based on pre-defined substructure patterns, namely motifs and graphlets, which should be fixed prior to model learning. The reliance on fixed sets of substructures leads to a lack of versatility when applied to diverse real-world networks, each of which may be characterized by different substructures. On the other hand, \cite{JinSS20,long2019hierarchical} eliminate the reliance on pre-selected substructures through a flexible substructure pattern named \textit{anonymous walks} (AW) \cite{micali2016reconstructing}. However, their solutions are largely empirical, in that they fail to compare their solutions against MPGNNs in a principled theoretical manner. As we would like to preserve the versatility and independence from pre-selection, studying AW based GNNs in a theoretically principled manner becomes our primary focus. 

We resort to a parallel line of work on graph mining, namely \textit{Graph Kernels} (GKs). Graph kernels are classical methods that measure similarities between pairwise substructures, nodes, or graphs and hence enable graph clustering, comparison, and classification \cite{narayanan2017graph2vec,she2011weisfeiler,zhang2018retgk}. We consider graph kernels appropriate for our problem for three reasons. First, graph kernels inherently involve comparisons between substructures, such as subtrees \cite{she2011weisfeiler}, graphlets \cite{shervashidze2009efficient} and random walks \cite{kang2012fast}, making it natural for us to incorporate and analyze AW in the framework. In addition, connections between GKs and GNNs have been well-identified \cite{lei2017deriving,chen2020convolutional,du2019graph}, making it easier to adapt the theory on graph kernels to design and analyze GNNs. Finally, although graph kernels on AW have been proposed by AWE \cite{ivanov2018anonymous}, both the relationship between AWE and GNNs, and analyses on AWE remain elusive. We, therefore, consider GKs as instrumental tools in theoretically bridging AW to GNNs. 

In this paper, we propose the Graph Structural Kernel Network (GSKN), a GNN model derived from GKs that incorporates AWs and provably extends MPGNNs and the 1-WL test in terms of distinguishing graph structures. Specifically, we first design the anonymous walk graph kernel (AWGK), and derive the its GNN architecture to compute its kernel mapping. 
We build GSKN by combining the kernel mapping of AWGK with existing graph kernel mappings. We further analyze theoretically the ability of GSKN to distinguish graph structures from both graph-level and node-level viewpoints, on both of which GSKN is provably more powerful than MPGNNs and the 1-WL test. Correspondingly, we carry out extensive experiments on both graph and node classification tasks, GSKN outperforms various strong baselines in all scenarios, which coincides with the theoretical analysis and endorses its versatility. 

We summarize our contributions as follows: 

\begin{itemize}
    \item We propose GSKN, a GNN model based on GKs and anonymous walks that complements the inability to model substructures of MPGNNs in a theoretically principled manner. 
    \item We design the anonymous walk graph kernel (AWGK) and derive the corresponding GNN architecture to compute its kernel mapping. We further build GSKN by flexibly combining AWGK with existing graph kernel mappings.
    \item We theoretically show that GSKN possesses a stronger ability to distinguish graph structures from both graph-level and node-level viewpoints. 
    \item Graph and node classification on synthetic and real-world datasets are carried out, where GSKN outperforms various strong baselines, coinciding with the theoretical analysis. 
\end{itemize}

\section{Related Work}
\subsection{Graph Neural Networks (GNNs)}
Yet GNNs were proposed in relatively early ages \cite{scarselli2008graph}, only in recent years did we observe their tremendous success \cite{hamilton2017inductive,velivckovic2017graph}. One important reason underlying the popularity is the simplification from spectral methods \cite{chung1997spectral} to localized models \cite{defferrard2016convolutional,kipf2016semi,wu2019simplifying}, which connects GNNs to message passing \cite{gilmer2017neural,bayati2011dynamics}, or neighborhood aggregation \cite{xu2018powerful} and significantly promotes the efficiency of GNNs. Consequently, GNNs proposed henceforth largely belong to the category of message-passing GNNs (MPGNNs) \cite{chen2020can,velivckovic2017graph}. 

However, MPGNNs bear drawbacks in their ability to model substructures in graphs. For example, it is shown by \cite{morris2019weisfeiler,xu2018powerful} that the ability of MPGNNs to distinguish graphs is upper bounded by the 1-Weisfeiler-Lehman Isomorphism Test (1-WL Test). 1-WL Test is known to be blind to regular graphs, and analysis by \cite{sato2020survey} shows that numerous pairwise graphs besides regular graphs can also fool the 1-WL Test, and hence MPGNNs. As another example, \cite{chen2020can,arvind2020weisfeiler} show that MPGNNs are neither able to count subgraph isomorphisms for patterns with more than 3 nodes, nor able to detect any subgraphs other than forests of stars. These analyses point out the inherent limitations of MPGNNs on tackling graph substructures.

\subsection{Substructures in Graph Mining}
Substructures have been identified as important indicators in graph mining since \cite{granovetter1973strength}, who pointed out the significance of triads in social networks. Later works focus on more diverse sets of substructures, such as motifs \cite{milo2002network}, i.e. over-represented subgraph patterns, and graphlets \cite{prvzulj2007biological}, i.e. induced subgraph patterns. These substructures have proven to be indicative in analyzing numerous types of graphs, including biochemical networks \cite{prvzulj2007biological}, social networks \cite{juszczyszyn2008local,li2017motif,paranjape2017motifs} and even semantic segmentation \cite{zhang2013probabilistic}.

The importance of substructures motivates researchers to seek solutions that complement MPGNNs with the awareness of substructures. One line of works aims to leverage network motifs and graphlets \cite{lee19-motif-attention,bouritsas2020improving} to build substructure-aware GNNs. However, both of them rely on fixed sets of motifs and graphlets that need to be selected prior to model learning. Since real-world networks are diverse and may be characterized by varying types of substructures, the reliance on pre-selection significantly limits their applications in practice. Another line of works resort to a more flexible substructure unit called anonymous walks (AW) \cite{JinSS20,long2020graph} which do not require pre-selection, and build GNN models with impressive empirical results. However, one common drawback of them is that they fail to demonstrate their solutions compared to MPGNNs in a theoretically principled manner. 

\subsection{Graph Kernels (GKs)} 
Kernel methods have been widely studied and applied in general machine learning \cite{shawe2004kernel}. Kernel methods evaluate pairwise similarities between data samples through kernel functions, during which the data samples are implicitly projected onto higher dimensional spaces (called the \textit{Reproducing Kernel Hilbert Space}, RKHS) and thus endowed with richer features to facilitate classification. 

Kernel methods on graphs, namely Graph Kernels (GK) evaluate pairwise similarity between nodes or graphs by decomposing them into basic structural units, characteristic choices of which include random walks \cite{kang2012fast}, subtrees \cite{she2011weisfeiler}, shortest paths \cite{borgwardt2005shortest}, graphlets \cite{shervashidze2009efficient}, and also AW \cite{ivanov2018anonymous}. Such decomposition,  
provides principled manners to analyze the ability of GKs to express graph structures \cite{kriege2018property,chen2020convolutional,gartner2003graph}. 

Due to their appealing theoretical properties, there have also been noticeable efforts in fusing GKs with GNNs. \cite{lei2017deriving,chen2020convolutional,du2019graph} derive GNN architectures for various graph kernels, such as subtrees, random walks, paths, and their mixtures. However, regarding the AWE kernel \cite{ivanov2018anonymous}, no efforts are made to connect it with GNNs.

\section{Preliminaries}
\subsection{Notations}
We represent a graph by $G = (V, E, X)$ where $V, E, X$ denote node sets, edge sets and attribute matrices respectively. We denote the Dirac function as $\delta(\cdot, \cdot)$, where $\delta(a, b) = 1$ iff $a=b$. We use $\mathcal{W}^l(G, u)$ to denote the set of random walks with length $l$ starting from $u$ in graph $G$, and $\mathcal{P}^l(G, u)$ similarly as the set of length $l$ paths. A path $p$ does not allow duplicate nodes while a walk $w$ does. We denote $X(w)=(X_{w_1}, X_{w_2}, ...X_{w_{|w|}})$ as the concatenation of node attributes $X$ along a walk $w$, and similarly for a path $p$. 
\subsection{Graph Kernels}
Given two graphs $G_1 = (V_1, E_1, X_1)$ and $G_2 = (V_2, E_2, X_2)$, a graph kernel function $K(G_1, G_2)$ returns a similarity measure between $G_1$ and $G_2$ through the following formula: 
\begin{equation}
    K(G_1, G_2) = \sum_{u_1\in V_1}\sum_{u_2\in V_2} \kappa_{base}\left(l_{G_1}(u_1), l_{G_2}(u_2)\right)
    \label{eqn:gk}
\end{equation}
where $l_{G}(u)$ denotes a set of local substructures centered at node $u$ in graph $G$, and $\kappa_{base}$ is a base kernel computing the similarity between two sets of substructures. For simplicity we may omit $l_G(u)$ and rewrite Eqn. \ref{eqn:gk} as:
\begin{equation}
    K(G_1, G_2) = \sum_{u_1\in V_1}\sum_{u_2\in V_2} \kappa_{base}(u_1, u_2)
    \label{eqn:gk_sim}
\end{equation}
as long as the substructure set is clearly stated. We use uppercase letter $K(G_1, G_2)$ to denote graph kernels, $\kappa(u_1, u_2)$ to denote node kernels, and lowercase $k(x, y)$ to denote general kernel functions. 

The kernel mapping of a kernel $\psi$ maps a data point into the corresponding RKHS $\mathcal{H}$. Formally, given a kernel $k_{*}(\cdot, \cdot)$, then the following equation holds for its kernel mapping $\psi_{*}$,
\begin{equation}
    \forall x_1, x_2, k_*(x_1, x_2) = \langle \psi_*(x_1), \psi_*(x_2)\rangle_{\mathcal{H}_{*}},
    \label{eqn:RKHS}
\end{equation}
where $\mathcal{H}_*$ is the RKHS of $k_*(\cdot, \cdot)$.

We introduce several commonly studied graph kernels below. 

\noindent \textbf{Walk and Path Kernels.} A $l$-walk kernel $K_{walk}^{(l)}$ compares all length $l$ walks starting from each node in two graphs $G_1, G_2$,
\begin{equation}
    \begin{aligned}
        \kappa_{walk}^{(l)}(u_1, u_2) &= \sum_{w_1\in \mathcal{W}^l (G_1, u_1)}\sum_{w_2\in \mathcal{W}^l(G_2, u_2)}\delta(X_1(w_1), X_2(w_2)),\\
        K_{walk}^{(l)}(G_1, G_2) &= \sum_{u_1\in V_1}\sum_{u_2\in V_2} \kappa_{walk}^{(l)}(u_1, u_2).
    \end{aligned}
    \label{eqn:walk_gk}
\end{equation}
Substituting $\mathcal{W}$ with $\mathcal{P}$  yields a $l$-path kernel. 

\noindent \textbf{WL Subtree Kernels.} The WL subtree kernel is a finite-depth kernel variant of the 1-WL test. A WL subtree kernel of depth $l$, $K_{WL}^{(l)}$ compares all subtrees with depth $\le l$ rooted at each node. 
\begin{equation}
    \begin{aligned}
        \kappa_{subtree}^{(i)}(u_1, u_2) &= \sum_{t_1 \in \mathcal{T}^i(G_1, u_2)}\sum_{t_2\in \mathcal{T}^i(G_2, u_2)} \delta(t_1, t_2)\\
        K_{subtree}^{(i)}(G_1, G_2) &= \sum_{u_1\in V_1}\sum_{u_2\in V_2} \kappa_{subtree}^{(i)}(u_1, u_2)\\
        K_{WL}^{(l)}(G_1, G_2) &= \sum_{i=0}^{l} K_{subtree}^{(i)}(G_1, G_2),\\
    \end{aligned}
    \label{eqn:wl-kernel}
\end{equation}
where $t\in \mathcal{T}^{(i)}(G, u)$ denotes a subtree of depth $i$ rooted at $u$ in $G$. 

In reality when node attributes $X$ are of continuous values, we may replace the hard $\delta(\cdot, \cdot)$ in Eqn. \ref{eqn:walk_gk} and \ref{eqn:wl-kernel} by soft relaxations, such as the Gaussian kernel
\begin{equation}
    k_{Gauss}(x_1, x_2) = \exp\left(-\frac{\alpha}{2}\|x_1-x_2\|_2^2\right),
    \label{eqn:gaussian}
\end{equation}
whose RKHS $\mathcal{H}_{Gauss}$ is of infinite dimension.

\subsection{Anonymous Walks}
We briefly introduce anonymous walks (AWs) here and refer readers to \cite{micali2016reconstructing,ivanov2018anonymous} for further details. An anonymous walk is similar to a random walk, but with the exact identities of nodes removed. A node in an anonymous walk is represented by the first position where it appears. For example, $w_1=(0,9,8,11,9), w_2 = (3, 2, 9, 7, 2)$ are different random walks having the same anonymous walk $\phi_i=(0,1,2,3,1), i = 1, 2$. This AW indicates underlying triadic closures among 8, 9, 11 and 2, 7, 9. 

One appealing theoretical property of AWs is that one can reconstruct a complete neighborhood centered at node $v$ based on the AW distributions starting from $v$. 
\begin{theorem}[\cite{micali2016reconstructing}]
Let $B(v,r)$ be the subgraph induced by all nodes $u$ such that $dist(v,u)\leq r$ and $P_L$ be the distribution of anonymous walks of length $L$ starting from $v$, one can reconstruct $B(v,r)$ using $(P_1,...,P_L)$, where $L = 2(m+1)$, $m$ is the number of edges in $B(v,r)$.
\label{thm:aw}
\end{theorem}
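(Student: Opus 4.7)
The plan is to reconstruct $B(v,r)$ by extracting structural information from the AW distributions of increasing length. The key structural signal in an anonymous walk is its first-appearance labeling: two positions bearing the same integer must refer to the same underlying vertex, whereas two different integers refer to two distinct vertices. Thus every anonymous walk explicitly encodes the cycle-and-return pattern of its underlying random walk, even though the node identities are hidden. From $P_1$ I would immediately read off the degree of $v$ (and the existence of self-loops, if admitted). This handles the base of the induction.

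Proceeding inductively on $\ell \le L$, I maintain a candidate subgraph $H_\ell$ consistent with every observed AW of length $\le \ell$ and show it is uniquely determined. Each AW of length $\ell+1$ extends an AW of length $\ell$ by one step whose label either (a) is fresh, witnessing an edge to a not-yet-discovered vertex, or (b) re-uses an earlier label, witnessing an edge back to an already-known vertex. Comparing the observed multiplicities in $P_{\ell+1}$ with those predicted by all possible single-edge extensions of $H_\ell$ forces each new adjacency to be uniquely determined: any spurious extension would inflate or deflate the count of some AW pattern. In this way $H_{\ell+1}$ is obtained from $H_\ell$ and $P_{\ell+1}$.

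The main obstacle is to show that the inductive procedure has actually recovered all of $B(v,r)$ by the time $\ell$ reaches $L = 2(m+1)$. Two claims are needed. First, coverage: every edge of $B(v,r)$ is traversed by some walk from $v$ of length at most $L$. This rests on the combinatorial fact that in a connected (multi)graph with $m$ edges there is a closed walk from any vertex that traverses every edge and has length at most $2m$; an additional $+2$ gives the slack needed to absorb the outbound/return steps to an arbitrary edge in $B(v,r)$. Second, injectivity: no two distinct induced subgraphs on $\le m$ edges can yield identical AW distributions $P_1, \ldots, P_L$; any structural discrepancy between candidates produces a distinguishing anonymous walk of length at most $2(m+1)$ by traversing a cycle through the differing edge and returning.

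The hardest step is this injectivity argument, which is where the bound $L = 2(m+1)$ becomes tight. I would follow the Micali--Kempe strategy: given two candidate neighborhoods agreeing on $P_1,\ldots,P_L$, pick any edge on which they differ, construct an anonymous walk that forces a return to that edge, and bound the length of the resulting distinguishing walk by $2(m+1)$ using the Eulerian-style traversal above. Combining coverage and injectivity yields that the recovered $H_L$ must equal $B(v,r)$, completing the reconstruction.
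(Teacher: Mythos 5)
This theorem is not proved in the paper at all: it is imported verbatim from the cited reference \cite{micali2016reconstructing}, so there is no in-paper argument to compare your proposal against. Judged on its own merits, your sketch does follow the general shape of the Micali--Zhu reconstruction (edge-by-edge induction, coverage via a doubled-Eulerian closed walk of length $2m$, and a distinguishing-walk argument for uniqueness), but it has one outright error and one genuine gap. The error: for a simple graph without self-loops, $P_1$ is a point mass on the single anonymous walk $(0,1)$, so you cannot ``immediately read off the degree of $v$'' from it; recovering degrees is itself part of the work, since every anonymous-walk probability is a product of reciprocal degrees of the visited vertices, and those degrees are among the unknowns.

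The gap is precisely the step you label injectivity. You assert that ``any spurious extension would inflate or deflate the count of some AW pattern,'' but this is the entire content of the theorem, not a routine verification. Two distinct single-edge extensions of $H_\ell$ can a priori produce identical multisets of anonymous-walk patterns of length $\ell+1$ while differing only in how probability mass is distributed among them, and that mass depends on degrees of boundary vertices whose incident edges have not yet been reconstructed; moreover, walks of length up to $2(m+1)$ may exit $B(v,r)$ entirely, so $P_{\ell+1}$ is contaminated by structure outside the object being reconstructed. Showing that the distributions nonetheless pin down the next edge uniquely --- and that length $2(m+1)$ suffices for this --- is the technical heart of \cite{micali2016reconstructing} and cannot be discharged by the one-sentence appeal you give. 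As it stands, your argument establishes coverage but only asserts, rather than proves, uniqueness of the reconstruction.
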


We denote the set of anonymous walks of length $l$ starting from node $u$ in graph $G$ as $\Phi^l(G, u)$, and a single anonymous walk as $\phi\in \Phi^l(G, u)$, similar to the notations of paths and walks. 

\section{Model: GSKN }
In this section, we introduce our model Graph Structural Kernel Network (GSKN). Fig. \ref{fig:framework} gives an overview of GSKN. GSKN leverages two graph kernels, the random walk kernel (RWGK) which is similar to MPGNNs, and the anonymous walk kernel (AWGK) which will be derived later, carrying complementary structural information other than the RWGK. Nodes within a graph will be projected to the RKHS of both kernels, where we perform efficient multi-layer computation via the Nystr{\"o}m method and fuse both mappings. 

\begin{figure*}[htbp]
\centering
        \includegraphics[width=0.98\linewidth]{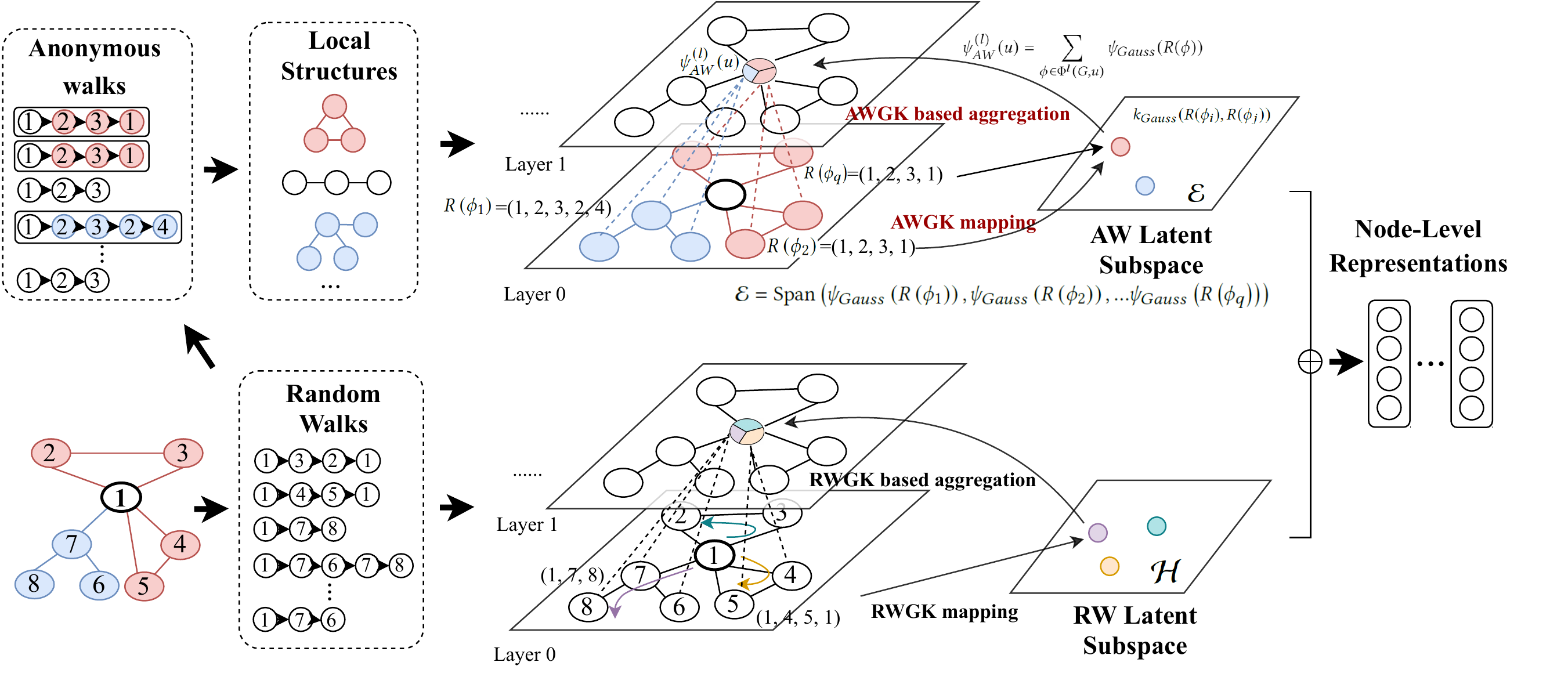}
        \caption{An overview of GSKN. GSKN combines two graph kernels, one being the RWGK, which is similar to MPGNNs, and the other being the AWGK, which captures additional structural information other than RWGK. Nodes are projected into the RKHS through the kernel mappings of both, which are formulated as GNNs and computed via the Nystr{\"o}m method. }
        \label{fig:framework}
\end{figure*}

\subsection{Anonymous Walk Graph Kernels}
We first formally extend AWs to the form of GKs. Similar to Eqn. \ref{eqn:walk_gk}, we design the $l$-anonymous walk graph kernel (AWGK) as 

\begin{equation}
    \begin{aligned}
        \kappa_{AW}^{(l)}(u_1, u_2) &= \sum_{\phi_1\in \Phi^l (G_1, u_1)}\sum_{\phi_2\in \Phi^l(G_2, u_2)}k_{Gauss}(R(\phi_1), R(\phi_2)),\\
        K_{AW}^{(l)}(G_1, G_2) &= \sum_{u_1\in V_1}\sum_{u_2\in V_2} \kappa_{AW}^{(l)}(u_1, u_2),
    \end{aligned}
    \label{eqn:anonymous_gk}
\end{equation}
where for an AW $\phi$ with length $l$, $R(\phi)\in \mathbb{R}^{l^2}$ denotes the concatenation of one-hot anonymous walk attributes along it. For example, $\phi = (1, 2, 3, 1)$, $R(\phi) = [0001, 0010, 0100, 0001]$. Since enumerating $\Phi^{l}(G, u)$ is prohibitively costly, we only sample a certain number of AWs, making $|\Phi^l(G, u)|$ constant. 

We also combine the AWGK with other kernels that can incorporate node attributes. In GSKN, we adopt random walk graph kernels (RWGK) defined in Eqn. \ref{eqn:walk_gk} due to its connection with MPGNNs \cite{chen2020convolutional,lei2017deriving}. We combine both kernels and yield the anonymous-random walk kernel (ARGK) as: 
\begin{equation}
    \begin{aligned}
        \kappa_{AR}^{(l)}(u_1, u_2) &= \kappa^{(l)}_{walk}(u_1, u_2) + \kappa_{AW}^{(l)}(u_1, u_2)\\
        K_{AR}^{(l)}(G_1, G_2) &= \sum_{u_1\in V_1}\sum_{u_2\in V_2} \kappa_{AR}^{(l)}(u_1, u_2),
    \end{aligned}
    \label{eqn:anonymous_random_gk}
\end{equation}

Combining the two kernels via addition yields a simple formulation of the kernel mapping $\psi_{AR}^{(l)}$,
\begin{equation}
    \psi_{AR}^{(l)}(u) = \left[\psi_{walk}^{(l)}(u)\big\|\psi_{AW}^{(l)}(u)\right],
   \label{eq:combine_kernels}
\end{equation}
where $[x\|y]$ denotes vector concatenation. The same holds when we replace $u$ with $G$. 


\subsection{From AWGK to GNNs}
Kernel methods \cite{zhang2018retgk,williams2001using} implicitly perform projections from original data spaces to their RKHS $\mathcal{H}$, as shown in Eqn. \ref{eqn:RKHS}. Hence, as GNNs also project nodes or graphs into vector spaces, connections have been established between GKs and GNNs through kernel mappings. In this section, we follow \cite{chen2020convolutional} and analyze the kernel mapping of the AWGK $\psi_{AW}$ to derive the corresponding GNN formulation, which serves as an important building block of GSKN. 

Specifically, denoting $\psi_{Gauss}$ as the kernel mapping of the Gaussian kernel (Eqn. \ref{eqn:gaussian}), the kernel mapping of $\kappa_{AW}^{(l)}$ can be written as 
\begin{equation}
    \psi_{AW}^{(l)}(u) = \sum_{\phi\in \Phi^l(G, u)} \psi_{Gauss}(R(\phi)).
    \label{eqn:psi-aw}
\end{equation}

However, as $\mathcal{H}_{Gauss}$ is of infinite dimension, it is impossible to evaluate Eqn. \ref{eqn:psi-aw} accurately. Alternatively, we resort to the Nystr{\"o}m Method \cite{williams2001using} for a finite-dimensional approximation. 

Nystr{\"o}m method aims to project points from an arbitrary RKHS into finite-dimensional subspaces. Specifically, given a collection of $q$ anonymous walk `landmarks' $\Phi_q=\{R(\phi_1), R(\phi_2),..., R(\phi_{q})\}$, 
the corresponding $q$-dimensional subspace is taken as 
\begin{equation}
    \mathcal{E} = \mathrm{Span}\left(\psi_{Gauss}\left(R\left(\phi_1\right)\right), \psi_{Gauss}\left(R\left(\phi_2\right)\right), ...\psi_{Gauss}\left(R\left(\phi_q\right)\right)\right),
\end{equation}
and the projection of another AW $R(\phi')$ onto $\mathcal{E}$ can be done by \cite{chen2019biological}: 
\begin{equation}
    \begin{aligned}
    \psi_{Gauss}(R(\phi')) =&[k_{Gauss}(R(\phi_i),R(\phi_j))]_{ij}^{-\frac{1}{2}} \\
    & [k_{Gauss}(R(\phi_1),R(\phi')),...k_{Gauss}(R(\phi_{q}),R(\phi'))]^T,
    \end{aligned}
    \label{eqn:psi_aw_nystrom1}
\end{equation}
where $[k_{Gauss}(R(\phi_i),R(\phi_j))]_{ij}$ is a $q\times q$ positive-semidefinite matrix formed by the kernel values between the $q$ landmarks. 

Note that 
\begin{equation}
    \begin{aligned}
    k_{Gauss}(R(\phi_i), R(\phi_j)) &=\exp\left(-\frac{\alpha}{2}\|R(\phi_i)-R(\phi_j)\|_2^2\right)\\
    &= \exp\left(\alpha R(\phi_i)^TR(\phi_j)-\alpha l\right)\\
    &= \sigma\left(R(\phi_i)^TR(\phi_j)\right),
    \end{aligned}
    \label{eqn:gaussian-sigma}
\end{equation}
which is essentially a dot product followed by a non-linear transformation $\sigma(x) = \exp(\alpha x - \alpha l)$. Therefore, denoting $Z = \left[R(\phi_i) \right]_{i}$, Eqn. \ref{eqn:psi_aw_nystrom1} can be rewritten as 
\begin{equation}
    \psi_{Gauss}(R(\phi')) = \sigma\left(Z^TZ\right)^{-1/2}\sigma\left(Z^TR(\phi')\right) \in \mathbb{R}^q,
\end{equation}
and consequently, Eqn. \ref{eqn:psi-aw}, the kernel mapping of the AWGK on nodes, can be approximately computed as
\begin{equation}
\psi_{AW}^{(l)}(u) = \sum_{\phi\in \Phi^l(G, u)}\sigma\left(Z^TZ\right)^{-1/2}\sigma\left(Z^TR(\phi)\right) \in \mathbb{R}^q,
\label{eqn:aw-gnn-node}
\end{equation}
which can be essentially interpreted as a one-layer GNN, consisting of linear transformation of anonymous walks $Z^TR(\phi)$, non-linearity $\sigma$, and sum-pooling\footnote{In practice, we add $ \epsilon = \text{1e-7}$ to the diagonal, i.e. $(\sigma(Z^TZ) + \epsilon I)^{-1/2}$ for better numerical stability.}. Following \cite{chen2020convolutional} we can also stack multiple layers of Eqn. \ref{eqn:aw-gnn-node} to resemble a $L$-layer GNN,
\begin{equation}
    \psi_{AW, L}^{(l)} = \underbrace{\psi_{AW, 1}^{(l)}\circ \psi_{AW, 2}^{(l)}\circ ...}_{L \text{ layers}}.
\end{equation}

The kernel mapping for $K_{AW}^{(l)}(G_1, G_2)$ can be thus computed through a sum pooling over all nodes, 
\begin{equation}
    \Psi_{AW, L}^{(l)}(G) = \sum_{u\in V} \psi_{AW, L}^{(l)} (u),
    \label{eqn:aw-gnn-graph}
\end{equation}
which is also similar to GNNs. 

Finally, we combine the kernel mappings of AWGK and RWGK via concatenation (Eqn. \ref{eq:combine_kernels}) to get node-level embeddings
\begin{equation}
    \psi_{AR, L}^{(l)}(u) = \left[\psi_{walk, L}^{(l)}(u)\big\| \psi_{AW, L}^{(l)}(u)\right],
    \label{eq:gcn}
\end{equation}
where $\psi_{walk, L}^{(l)}(u)$ is the $L$ layer GCKN-walk defined in \cite{chen2020convolutional}, which basically replaces $AW$ with $walk$ in Eqn. \ref{eqn:aw-gnn-node}. Replacing $u$ with $G$, $\psi$ with $\Psi$ yields graph-level embeddings. \footnote{Note that $\psi_{walk}^{(l)}$ can be replaced by any other kernel mapping which leverages node attributes. We take the RWGK for simplicity.  }

\subsection{Learning the Model}
The model parameters are essentially $Z_{walk}$ and $Z_{AW}$, i.e. the landmarks for the finite-dimensional approximation, which are equivalent to weights in common GNNs. Since we adopt kernel methods, we adopt an unsupervised method for selecting the $Z$. 

\cite{zhang2008improved} shows that the landmarks for Nystr{\"o}m Methods can be efficiently chosen by running $k$-means clustering on the data, and \cite{chen2020convolutional} shows that such techniques are also empirically good under graph kernels. Therefore, we learn the $Z_{AW}$ by running $q$-means on the anonymous walks $R(\phi)$, and similarly for $Z_{walk}$. 


\noindent \textbf{Complexity}
The complexity of computing the embeddings of GSKN consists of three parts: walk enumeration, feature transformation, and $q$-means training. Finding all length $l$ walks can be done using Depth-First Search (DFS), whose worst-case complexity for each graph is $O(|V|d^l)$, where $d$ is the maximum degree. The complexity for sampling anonymous walks is $O(|V|\cdot |\Phi^l(G,u)|l)$. 

For feature transformation, denoting the input and output dimensions as $q_0$, $q_1$ respectively, each walk is encoded in $q_1$ inner products, each with a complexity of $l\cdot q_0$, yielding $O(l\cdot q_1q_0)$. Similarly, each anonymous walk is encoded in $O(q_1\cdot l^2)$ operations. 

For $q$-means training, the complexity for heuristically solving a $q$-means on $n\times d$ data would be $O(qnd)$. Therefore, for walks and AWs the complexity would be $O(|V|d^lq_0q_1\cdot l)$, $O(|V|\cdot|\Phi^l(G, u)|l^2q_1))$ respectively. Therefore, the total complexity would be asymptotically dominated by $O(|V|d^l)$, which is irrelevant to AW. Thus, we conclude that the efficiency of GSKN should not be compromised by incorporating AW. 

Considering the fact that the complexity is exponential to $l$ for RW, but only polynomial for AW, in practice we use different $l$ for RW and AW, which will be detailed in the experiments. 

We show the pseudo-code of GSKN in Algorithm \ref{alg:GSKN}. 

\begin{algorithm}[tb]
\caption{Algorithm of GSKN}
\label{alg:GSKN}
\begin{algorithmic}[1] 
\REQUIRE Graph $G=(V,E,X)$
\ENSURE Node-level embeddings $\psi_{AR}^{(l)}$, graph-level embeddings $\Psi_{AR}^{(l)}$

\FORALL {nodes $u \in V$}
    \STATE Generate random walk sequences $\mathcal{W}^l(G, u)$.
    \STATE Generate anonymous walk sequences $\Phi^l(G, u)$.
\ENDFOR
\FOR {layer $k=0, ..., L$}
    \FORALL {center node $u \in V$}
        \STATE Compute kernel mapping of AWGK  (Eqn. \ref{eqn:aw-gnn-node}).
        \STATE Compute kernel mapping of RWGK. 
        \STATE Node-Level Embedding $\psi_{AR, k}^{(l)}(u)$ (Eqn. \ref{eq:gcn}) 
    \ENDFOR
    \STATE Unsupervised Learning of $Z$ with Nystr{\"o}m method. 
\ENDFOR

\STATE Graph-Level Embedding $\Phi(G)=\sum_{u\in V}\psi_{AR, L}^{(l)}(u)$ 
\RETURN $\psi_{AR, L}^{(l)}(u), u\in V$, $\psi_{AR, L}^{(l)}(G)$. 
\end{algorithmic}
\end{algorithm}

\section{Theoretical Analysis}
\label{sec:theory}
In this section, we provide theoretical analyses regarding the ability of GSKN to distinguish different graph structures. Corresponding to the previous formulations, we analyze from two viewpoints, both graph-level, and node-level. 

We begin our analysis from the graph-level analysis and reach the following conclusion. 
\begin{theorem}
Suppose the feature space $\mathcal{X}$ is finite, and define the space of graphs with feature space $\mathcal{X}$ as $\mathcal{G} = \{G=(V, E, X), X\subset \mathcal{X}\}$. Given an arbitrary graph $G\in \mathcal{G}$, there exists parameterization of GSKN such that it is as powerful as the 1-WL test and hence MPGNNs. Moreover, There exists graphs $G'\in \mathcal{G}$ such that GSKN can learn strictly more powerful functions than the 1-WL test. 
\label{thm:graph}
\end{theorem}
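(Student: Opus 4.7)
The plan is to split Theorem \ref{thm:graph} into two matching claims and attack each with a different ingredient of GSKN. The first claim (at-least-as-powerful as the 1-WL test) will be handled by the RWGK component $\psi_{walk,L}^{(l)}$, invoking the existing analysis of GCKN-walk in \cite{chen2020convolutional}, which shows that a walk-based kernel network of sufficient depth and with a rich enough landmark set is at least as discriminative as 1-WL. The second claim (strict strengthening on some graph) will be handled by the AWGK component $\psi_{AW,L}^{(l)}$ through an explicit pair of graphs that 1-WL fails on but whose length-$l$ anonymous-walk distributions differ. Because $\psi_{AR,L}^{(l)}(u)=\bigl[\psi_{walk,L}^{(l)}(u)\,\|\,\psi_{AW,L}^{(l)}(u)\bigr]$, any separation achieved by either component is inherited by GSKN, so the two halves together yield the full statement.

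For the first half I would argue as follows. Since $\mathcal{X}$ is finite, the set of length-$l$ walk-attribute strings $X(w)\in\mathcal{X}^{l+1}$ is finite. Choose the landmark matrix $Z_{walk}$ to contain a representative for every element of this finite set; then the Nystr\"om projection in Eqn.\ \ref{eqn:aw-gnn-node} (specialized to walks) becomes an injective mapping from multisets of walk-attribute strings into $\mathbb{R}^{q}$, because $\sigma(Z^{T}Z)^{-1/2}$ is invertible and the vectors $\sigma(Z^{T}R(\cdot))$ are linearly independent across distinct strings. Stacking $L$ layers of such mappings and using the standard recursion that encodes the $L$-step 1-WL color refinement inside walk aggregation (exactly as done for GCKN-walk in \cite{chen2020convolutional} and for GIN in \cite{xu2018powerful}), one shows that whenever 1-WL distinguishes $G_{1}$ and $G_{2}$ at depth $L$, the graph-level readout $\Psi_{walk,L}^{(l)}(G_{1})\neq\Psi_{walk,L}^{(l)}(G_{2})$, and hence $\Psi_{AR,L}^{(l)}(G_{1})\neq\Psi_{AR,L}^{(l)}(G_{2})$.

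For the second half I would exhibit the canonical 1-WL-blind pair: let $G_{1}=C_{6}$ (a single $6$-cycle) and $G_{2}=C_{3}\sqcup C_{3}$ (two disjoint triangles) with identical constant node attributes. Both graphs are $2$-regular with the same degree multiset at every refinement step, so 1-WL cannot separate them and no MPGNN can either. Now take $l=3$: the anonymous walk $\phi^{*}=(0,1,2,0)$ is realized by every triangle-vertex in $G_{2}$ (by traversing the triangle once), yet is impossible in $G_{1}$ because $C_{6}$ has no $3$-cycle and no walk of length $3$ returns to its starting vertex. Thus $\phi^{*}$ contributes a nonzero coordinate to $\Psi_{AW}^{(3)}(G_{2})$ and a zero coordinate to $\Psi_{AW}^{(3)}(G_{1})$. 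Choosing $Z_{AW}$ so that some landmark collides with $R(\phi^{*})$ (or more robustly, any injective landmark set as in the previous paragraph) makes $\Psi_{AW,L}^{(3)}(G_{1})\neq\Psi_{AW,L}^{(3)}(G_{2})$, whence $\Psi_{AR,L}^{(3)}$ separates them as well, proving strict superiority on $G'\in\{G_{1},G_{2}\}$.

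The step I expect to be the main obstacle is the rigorous simulation argument in the first half: one must verify that even though the underlying Gaussian kernel and its Nystr\"om compression live in a continuous RKHS, the composition of $L$ such layers really does implement an injective refinement equivalent to 1-WL on the finite domain $\mathcal{X}$. Concretely, I would need a short lemma stating that, for finite $\mathcal{X}$ and for a landmark set covering every possible length-$l$ walk pattern, the one-layer map $h\mapsto \sigma(Z^{T}Z)^{-1/2}\sigma(Z^{T}h)$ is injective on any finite set of inputs, so that the multiset of neighbor-walk embeddings uniquely determines the aggregated output and induction on layers goes through. The AWGK side (second half) is largely a clean combinatorial counting argument and should not require more than the cycle-length observation above, but care is needed to ensure the chosen landmark set $Z_{AW}$ actually realizes a nonzero gap in the projected coordinates; this can be handled uniformly by again appealing to the finiteness of the set of length-$l$ anonymous walks and a covering landmark choice.
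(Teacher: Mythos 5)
Your proposal is correct and mirrors the paper's own argument: both halves are proved the same way --- the lower bound by making the layer-wise Nystr\"om map injective on the finite feature space and then invoking the GIN/GCKN injectivity machinery (the paper's Lemmas \ref{lem:injective_linear}--\ref{lem:injective_layer} and Corollary \ref{cor:lower_bound}), and the strict separation by a cycle-versus-disjoint-cycles pair that 1-WL cannot tell apart but whose anonymous-walk sets differ (your $C_6$ vs.\ $C_3\sqcup C_3$ is the $k=3$ instance of the paper's $R_{2k}$ vs.\ $R_{k,k}$ example, with Lemma \ref{lem:concat} supplying the inheritance through concatenation in both treatments). The only cosmetic differences are that you obtain injectivity via a covering landmark set and strict positive-definiteness of the Gaussian Gram matrix rather than the paper's generic-injective-linear-map argument, and you witness the separation with the closed walk $(0,1,2,0)$ rather than the paper's observation that walks on $R_{k,k}$ visit at most $k$ distinct nodes.
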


We would like to point out that although the finite feature space assumption is weaker than that of \cite{xu2018powerful}, it is still adequate in practice. For categorical attributes (e.g. one-hot, discrete, bag-of-words) with finite dimensions, the feature space is always finite. For continuous node attributes, in practice, we always discretize them into bins (e.g. through floating-point numbers), yielding a finite feature space.

We prove the theorem in the following procedure. 
\begin{enumerate}
    \item First, we show that the layer-wise update rule for both $\psi_{AW}^{(l)}$ and $\psi_{walk}^{(l)}$ are injective w.r.t input attributes in Lemma \ref{lem:injective_layer}. Upon it, we show the power of GSKN in Corollary \ref{cor:lower_bound}. 
    \item In addition, we show that GSKN can distinguish graph structures while 1-WL and MPGNNs cannot in Lemma \ref{lem:strict}. 
\end{enumerate}

\begin{lem}
Given $f(x) = \left[f_1(x)\|f_2(x)\right]$ and $x_1\neq x_2$, $f(x_1)\neq f(x_2)$ if $f_1(x_1)\neq f_1(x_2)$ or $f_2(x_1)\neq f_2(x_2)$. 
\label{lem:concat}
\end{lem}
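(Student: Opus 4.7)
The plan is to prove this by contraposition on the conclusion, unfolding the definition of vector concatenation. Writing out $f(x_i) = [f_1(x_i)\|f_2(x_i)]$ for $i=1,2$, the concatenated tuples $f(x_1)$ and $f(x_2)$ are equal if and only if they agree blockwise, i.e.\ both $f_1(x_1) = f_1(x_2)$ and $f_2(x_1) = f_2(x_2)$ hold simultaneously. Contrapositively, if at least one of $f_1(x_1)\neq f_1(x_2)$ or $f_2(x_1)\neq f_2(x_2)$ holds, then equality in the corresponding coordinate block fails, so $f(x_1)\neq f(x_2)$.

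First I would state the two cases explicitly: in the case $f_1(x_1)\neq f_1(x_2)$, the first block of $f(x_1)$ already differs from the first block of $f(x_2)$, hence the full concatenation differs; symmetrically for the case $f_2(x_1)\neq f_2(x_2)$ using the second block. The hypothesis $x_1\neq x_2$ actually plays no role in the argument, since the claim is purely about how concatenation of two maps propagates inequality from either component. The only way to ``lose'' a distinction when concatenating is if both components collapse it, which is exactly what the disjunctive hypothesis rules out.

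There is no genuine obstacle here; the lemma is a one-line bookkeeping fact, and I expect its actual use to be later in Section~\ref{sec:theory} as glue for combining the two kernel mappings in Eqn.~\ref{eq:combine_kernels}, allowing a proof of injectivity of $\psi_{AR}^{(l)}$ to be reduced to injectivity of either $\psi_{walk}^{(l)}$ or $\psi_{AW}^{(l)}$ separately. In other words, this lemma will be applied with $f_1 = \psi_{walk}^{(l)}$ and $f_2 = \psi_{AW}^{(l)}$ to lift distinguishing power from whichever component captures the relevant structural difference.
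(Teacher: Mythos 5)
Your argument is correct and matches the paper's proof, which likewise reduces the claim to the trivial fact that $x\neq y\Rightarrow [x\|z]\neq [y\|z]$ (i.e.\ a difference in either block survives concatenation). Your observations that $x_1\neq x_2$ is not actually used and that the lemma's role is to let $\psi_{AR}^{(l)}$ inherit distinguishing power from either component are both accurate.
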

\begin{proof}
The result comes from $x\neq y\Rightarrow [x\|z]\neq [y\|z]$ and is hence trivial. 
\end{proof}

Since $\Psi_{AR, L}^{(l)}(G) = \left[\Psi_{walk, L}^{(l)}(G)\LARGE\|\Psi_{AW, L}^{(l)}(G)\right]$, $G_1$ and $G_2$ will be concluded different if either $\Psi_{walk, L}^{(l)}$ or $\Psi_{AW, L}^{(l)}$ deems them different. Therefore, the ability of $\Psi_{AR, L}^{(l)}$ will be lower bounded by both $\Psi_{walk, L}^{(l)}$ and $\Psi_{AW, L}^{(l)}$. We then show the ability of both of them by analyzing its layer-wise update function. 

\begin{lem}
Consider function $g(x) = Wx$, $x\in X$, and $X\subset \mathbb{R}^a$ is finite. $W\in \mathbb{R}^{b\times a}$. Then $\exists W$, such that $g(x)$ is injective w.r.t $x$.
\label{lem:injective_linear}
\end{lem}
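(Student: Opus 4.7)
The plan is to reduce injectivity of the linear map $g$ on the finite set $X$ to a finite collection of avoidance conditions on $W$, and then to argue that those conditions can all be met simultaneously by a generic choice of $W$. Throughout I assume $b \geq 1$, which is harmless since $b$ is a design-chosen output dimension in GSKN (and when $b=0$ the statement is only meaningful if $|X|\leq 1$, in which case it is vacuous).

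First, I would observe that $g(x_1) = g(x_2)$ iff $W(x_1 - x_2) = 0$. Hence $g$ is injective on $X$ if and only if $Wd \neq 0$ for every $d$ in the set of nonzero difference vectors
\[
D = \{x_1 - x_2 : x_1, x_2 \in X,\ x_1 \neq x_2\} \subset \mathbb{R}^a \setminus \{0\}.
\]
Finiteness of $D$ follows immediately from finiteness of $X$, so $|D| \leq |X|(|X|-1)$.

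Second, for each fixed nonzero $d \in \mathbb{R}^a$ the set
\[
S_d = \{W \in \mathbb{R}^{b \times a} : Wd = 0\}
\]
is a linear subspace of the matrix space $\mathbb{R}^{b \times a} \cong \mathbb{R}^{ba}$. Since $d \neq 0$, the constraint $Wd = 0$ forces each of the $b$ rows of $W$ to lie in the hyperplane $d^{\perp} \subset \mathbb{R}^a$ of dimension $a-1$; consequently $\dim S_d = b(a-1) < ba$, i.e. $S_d$ is a proper linear subspace. The set of "bad" choices of $W$ is therefore the finite union $\bigcup_{d\in D} S_d$.

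Finally, I would invoke the standard fact that a vector space over an infinite field such as $\mathbb{R}$ cannot be written as a finite union of proper subspaces. This immediately gives $\bigcup_{d \in D} S_d \subsetneq \mathbb{R}^{b \times a}$, so any $W$ in the complement yields the desired injective linear map. As a constructive alternative, drawing the entries of $W$ independently from any absolutely continuous distribution (e.g.\ i.i.d.\ Gaussian) lands $W$ outside the measure-zero union $\bigcup_{d \in D} S_d$ with probability one. I do not foresee a genuine obstacle here: the argument is pure linear algebra, and its conceptual heart is simply the translation of injectivity on a finite set into finitely many nondegeneracy conditions, each cutting out a proper subspace of the matrix space.
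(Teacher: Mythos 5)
Your proof is correct, and its first step---reducing injectivity of $g$ on $X$ to the condition that no nonzero difference $x_1-x_2$ lies in $\mathrm{Ker}\,W$---is exactly the reduction the paper makes. Where you diverge is in how the existence of a good $W$ is then established. The paper splits into two cases: when $a<b$ it takes any full-rank $W$ so that $\mathrm{Ker}\,W=\{0\}$, and in the remaining case (which, due to an apparent typo, is also labelled ``$a<b$'' but should read $a\ge b$) it argues that since the difference set $\overline{X}$ is finite and the collection of possible kernels $\{\mathrm{Ker}\,W\}$ is infinite, some $W$ must have $\mathrm{Ker}\,W\cap\overline{X}=\emptyset$. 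That last inference is not literally valid as stated (infinitely many kernels could in principle all contain a common nonzero vector), and your argument is the rigorous repair: each condition $Wd=0$ for fixed $d\ne 0$ cuts out a proper linear subspace $S_d$ of $\mathbb{R}^{b\times a}$ of dimension $b(a-1)$, and a real vector space is never a finite union of proper subspaces, so the finitely many bad sets cannot exhaust the matrix space. Your version also handles both dimension regimes uniformly and adds the measure-zero / random-Gaussian observation, which gives a constructive (almost-sure) choice of $W$. In short, you prove the same lemma by the same reduction but close the gap that the paper's second case leaves open.
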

\begin{proof}
$g(x)$ is injective $\Leftrightarrow$ $\forall x\neq x'\in X, g(x) \neq g(x')\Leftrightarrow \forall x\neq x'\in X, x-x' \notin \mathrm{Ker}W$. When $a<b$, then as long as $\mathrm{rank}(W) = a$, $\mathrm{Ker}W = \{0\}$, and therefore $\forall x\neq x', x-x'\notin \mathrm{Ker}W$. 

When $a<b$, notice that $\overline{X} = \{x-x'|x\neq x'\in X\}$ is also finite. Since the set $\{\mathrm{Ker}W|W\in\mathbb{R}^{b\times a}\}$ is infinite, then there always exists $W$, such that $\mathrm{Ker} W\cap \overline{X} = \emptyset$. 
\end{proof}

\begin{lem}
Consider the layer-wise update function in Eqn. \ref{eqn:aw-gnn-node}. There exists parameter $Z$ such that the function is injective w.r.t $\Phi^{l}(G, u)$ (and similarly $\mathcal{W}^l(G, u)$). 
\label{lem:injective_layer}
\end{lem}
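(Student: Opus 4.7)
The plan is to reduce injectivity of the layer to a linear-independence question about Gaussian-kernel feature vectors, and then satisfy that by constructing $Z$ so that its columns enumerate all possible AW one-hot encodings of length $l$. First I would fix $l$ and note that an anonymous walk of length $l$ has entries in $\{0,1,\ldots,l-1\}$, so there are only finitely many distinct AW patterns; enumerate them as $\phi_1,\ldots,\phi_N$ with pairwise distinct one-hot encodings $R(\phi_1),\ldots,R(\phi_N)\in\{0,1\}^{l^2}$. Any multiset $\Phi^l(G,u)$ is then determined by its nonnegative integer count vector $c=(c_1,\ldots,c_N)$, and Eqn.~\ref{eqn:aw-gnn-node} becomes
\[
    M\sum_{i=1}^{N} c_i\,\sigma\!\left(Z^{T}R(\phi_i)\right), \qquad M := \sigma(Z^{T}Z)^{-1/2}.
\]
Provided $M$ is invertible, injectivity of this layer in the multiset reduces to $\mathbb{R}$-linear independence of the $N$ vectors $\{\sigma(Z^{T}R(\phi_i))\}_{i=1}^{N}$ in $\mathbb{R}^{q}$.

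Next I would construct $Z$ explicitly. Take $q\ge N$ and set the first $N$ columns of $Z$ to be exactly $R(\phi_1),\ldots,R(\phi_N)$ (the remaining columns arbitrary). Stacking $\sigma(Z^{T}R(\phi_j))$ as the $j$-th column of a $q\times N$ matrix, the top $N\times N$ block becomes $\bigl[\sigma(R(\phi_i)^{T}R(\phi_j))\bigr]_{ij}$. By Eqn.~\ref{eqn:gaussian-sigma} this block equals, up to the positive scalar $e^{-\alpha l}$, the Gaussian-kernel Gram matrix on the pairwise distinct points $R(\phi_1),\ldots,R(\phi_N)$, which is a classical strictly positive definite matrix. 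Hence that block is invertible, which simultaneously (a) legitimates $M=\sigma(Z^{T}Z)^{-1/2}$ as a well-defined invertible linear map (since it contains this block on its diagonal with arbitrary completion that can be chosen to preserve positive definiteness, e.g.\ landmarks far apart from each other and from the $R(\phi_i)$) and (b) ensures the $N$ stacked columns are linearly independent, which is exactly the desired injectivity.

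Finally, the random-walk case is handled identically under the finite feature-space assumption of Theorem~\ref{thm:graph}: only finitely many walk attribute tuples $X(w)\in\mathcal{X}^{l}$ are possible, so one takes $Z_{walk}$ to contain all of them as landmarks and repeats the Gaussian Gram argument verbatim. The main obstacle I anticipate is the double role played by $Z$ — it serves both as the landmarks defining the projection and inside the normalizer $\sigma(Z^{T}Z)^{-1/2}$, so the choice of $Z$ must simultaneously guarantee that $\sigma(Z^{T}Z)$ is invertible and that the projected features separate distinct AWs. Choosing $Z$ to literally contain the finite list of distinct encodings collapses both requirements into a single invocation of strict positive definiteness of the Gaussian kernel on pairwise distinct points, so the apparent coupling dissolves and the argument goes through cleanly.
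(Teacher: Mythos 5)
Your proof is correct, and it takes a genuinely different --- and in one respect tighter --- route than the paper's. The paper first shows (via its Lemma~\ref{lem:injective_linear} and the monotonicity of $\sigma$) that the per-element map $x\mapsto\sigma(Z^TZ)^{-1/2}\sigma(Z^Tx)$ can be made injective on a finite input set, and then appeals to Lemma~5 of \cite{xu2018powerful} to lift this to injectivity of the sum over the multiset $\Phi^{l}(G,u)$. You instead attack the multiset directly: since there are only finitely many AW patterns of length $l$, you encode the multiset as a count vector and reduce injectivity of the sum-pooled layer to $\mathbb{R}$-linear independence of the $N$ feature vectors $\sigma(Z^TR(\phi_i))$, which you obtain by taking the landmarks to be the full enumeration of AW encodings and invoking strict positive definiteness of the Gaussian kernel on pairwise distinct points. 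This buys you a self-contained argument: per-element injectivity of $f$ alone does not imply injectivity of $X\mapsto\sum_{x\in X}f(x)$ (one can have $f(a)+f(b)=f(c)+f(d)$ with $f$ injective), so your linear-independence step does real work that the paper delegates to the cited multiset lemma. The price is dimension: you need $q\ge N$ with $N$ the total number of distinct AWs of length $l$ (which grows like the Bell numbers), whereas the paper's $Z$ only needs to separate differences of inputs. Two small cleanups: by Eqn.~\ref{eqn:gaussian-sigma}, $\sigma(R(\phi_i)^TR(\phi_j))$ equals the Gaussian Gram entry exactly (since $\|R(\phi)\|_2^2=l$), not merely up to a scalar; and you may simply take $q=N$, so that $\sigma(Z^TZ)$ is precisely the strictly positive definite Gram matrix and the ``arbitrary completion'' caveat disappears.
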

\begin{proof}
We first prove that there exist $Z$ such that 
\begin{equation}
\sigma\left(Z^TZ\right)^{-1/2}\sigma\left(Z^Tx\right)
\label{eqn:inner}
\end{equation}
is injective to $x$, and Lemma \ref{lem:injective_layer} follows by Lemma 5 in \cite{xu2018powerful}.

According to Lemma \ref{lem:injective_linear}, there exists $Z$ such that $Z^Tx$ is injective w.r.t $x\in X$ from finite feature space. Also, $\sigma(\cdot)$ is element-wise monotonic and therefore injective, and $\sigma\left(Z^TZ\right)^{-1/2}$ is of full rank (because we add a small $\epsilon$ to its diagonal in practice), thereby multiplying by it is injective. 

We thus conclude that $\exists Z$ such that Eqn. \ref{eqn:inner} is injective, and the injectiveness of Eqn. \ref{eqn:aw-gnn-node} is ensured following Lemma 5 in \cite{xu2018powerful}. 
\end{proof}
\begin{corollary}
GSKN is at least as powerful as the 1-WL test, and hence the strongest MPGNNs. 
\label{cor:lower_bound}
\end{corollary}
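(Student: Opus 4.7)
The plan is a two-step reduction that leverages the lemmas already in hand. First, by Lemma~\ref{lem:concat}, the concatenated mapping $\psi_{AR,L}^{(l)} = [\psi_{walk,L}^{(l)}\,\|\,\psi_{AW,L}^{(l)}]$ distinguishes any pair of nodes (and graphs, via the same concatenation at the graph level) that is separated by either constituent alone. Hence it suffices to exhibit a parameterization of the random walk component $\psi_{walk,L}^{(l)}$ that already matches the 1-WL test. I would choose the random walk side because its layer-wise update is exactly the form analyzed in Lemma~\ref{lem:injective_layer}, and because the corresponding GCKN-walk construction of \cite{chen2020convolutional} that the paper builds on already follows this template.

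Next I would run the standard GIN-style argument of \cite{xu2018powerful} through this component. Lemma~\ref{lem:injective_layer} guarantees parameters $Z$ under which one layer of Eqn.~\ref{eqn:aw-gnn-node} is injective in the multiset of length-$l$ walks rooted at $u$. Arguing by induction on the layer index, if two nodes share identical GSKN embeddings after $L$ stacked layers, then contraposing the injectivity at the top layer forces equality of their length-$l$ walk multisets computed on the previous layer's embeddings; applying the inductive hypothesis node-by-node along the walks then forces equality of the 1-WL colors at round $L-1$. For the graph-level conclusion I would push the argument through the sum-pooling of Eqn.~\ref{eqn:aw-gnn-graph}, which under the finite feature-space assumption acts injectively on multisets of node embeddings by Lemma~5 of \cite{xu2018powerful}; hence two graphs that GSKN fails to separate must agree on the 1-WL graph color as well.

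The main obstacle is the inductive step, where one must verify that injectivity with respect to the length-$l$ walk multiset really is a sufficient statistic for the 1-WL refinement rule, which only hashes the pair (label of $u$, multiset of labels of $u$'s neighbors). This goes through because every walk in $\mathcal{W}^l(G,u)$ records $u$'s own feature at position $0$ and a neighbor of $u$ at position $1$ with multiplicity equal to the number of length-$(l-1)$ continuations through that neighbor; marginalizing the walk multiset to position $1$ therefore recovers the neighbor multiset exactly, and the remaining coordinates only add finer information. Once this bookkeeping is unwound the induction closes, and combined with the concatenation reduction above the corollary follows; all remaining ingredients, namely the per-layer injectivity and the sum-pooling step, are direct invocations of results already present in the excerpt.
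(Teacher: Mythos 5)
Your overall strategy is the paper's: reduce to the walk component via the concatenation lemma, invoke per-layer injectivity (Lemma~\ref{lem:injective_layer}) together with Theorem~3 of \cite{xu2018powerful}, and handle the readout by the sum-pooling argument. However, the step you yourself flag as the main obstacle --- that the multiset of length-$l$ walks is a sufficient statistic for the 1-WL refinement --- does not go through as you argue it. You correctly note that a neighbor $v$ appears in the position-$1$ marginal with multiplicity equal to its number of length-$(l-1)$ continuations, but then conclude that this marginal ``recovers the neighbor multiset exactly.'' It does not: the marginal is a multiset weighted by continuation counts, which vary across neighbors. Concretely, take $3$-node walks. Let $u_1$ have feature $x$ and two degree-one neighbors, each with feature $a$; let $u_2$ have feature $x$ and a single neighbor $w$ with feature $a$, where $w$ has one further degree-one neighbor with feature $x$. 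Both nodes yield the identical walk-feature multiset consisting of $(x,a,x)$ with multiplicity two, so any layer of the form of Eqn.~\ref{eqn:aw-gnn-node}, being a function of that multiset alone, assigns them equal embeddings; yet one round of 1-WL separates them, since their neighbor multisets are $\{a,a\}$ versus $\{a\}$. So ``injective with respect to the walk multiset'' does not imply ``refines the 1-WL colouring'' for general walk length, and your induction does not close.

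The fix is exactly the specialization the paper makes: instantiate the walk component with two-node walks (single edges), for which every neighbor has exactly one continuation, the walk multiset is literally $\{(X_u, X_v) : v \in N(u)\}$, and both $u$'s own feature and the neighbor feature multiset are recovered exactly. This is the paper's remark that ``neighbors can be viewed as length 2 walks''; with that instantiation your argument coincides with the paper's proof, and longer walks or the AW component can only add distinguishing power by Lemma~\ref{lem:concat}.
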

\begin{proof}
According to Theorem 3 in \cite{xu2018powerful}, a GNN is as strong as the 1-WL test as long as the layer-wise neighborhood aggregation for node features is injective, and the graph readout function is injective. According to Lemma \ref{lem:injective_layer} and the fact that neighbors can be viewed as length 2 walks, the layer-wise injective nature can be satisfied. Also, similar to the arguments on Eqn. 4.2 in \cite{xu2018powerful}, the graph readout is also injective. 
\end{proof}
We then show that $\Psi_{AR, L}^{(l)} = \left[\Psi_{walk, L}^{(l)}\|\Psi_{AW, L}^{(l)}\right]$ is able to learn functions beyond the 1-WL test. 
\begin{lem}
There exists graphs $G$ such that $\Psi_{AR, L}^{(l)}$ is strictly more powerful than the 1-WL test. 
\label{lem:strict}
\end{lem}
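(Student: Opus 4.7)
The plan is to exhibit a concrete pair $(G_1, G_2)$ of graphs that the $1$-WL test conflates while $\Psi_{AR,L}^{(l)}$ separates; combined with Corollary~\ref{cor:lower_bound}, this yields strict dominance.

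I would take $G_1 = C_6$ (the six-cycle) and $G_2 = 2K_3$ (the disjoint union of two triangles), both endowed with a common constant node attribute, so the finite feature space hypothesis of Theorem~\ref{thm:graph} holds trivially. Both graphs are $2$-regular on six vertices, so every node receives the same initial $1$-WL color and every refinement multiset is $\{c, c\}$ for the common color $c$. Consequently, $1$-WL, and hence every MPGNN, produces identical outputs on $G_1$ and $G_2$.

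To see that the AW block of $\Psi_{AR,L}^{(l)}$ separates them, take $l \ge 4$ so that a length-three closed subwalk fits. The anonymous walk $\phi^{\star} = (0,1,2,0,\ldots)$ beginning with a triangular closure belongs to $\Phi^{l}(G_2, u)$ for every $u \in V(G_2)$, since each vertex of $2K_3$ lies on a triangle; conversely, $C_6$ has girth six, so $\phi^{\star} \notin \Phi^{l}(G_1, u)$ for any $u \in V(G_1)$. Both $C_6$ and $2K_3$ are vertex-transitive, so all nodes within $G_i$ share a common AW multiset $M_i := \Phi^{l}(G_i, u)$, and $M_1 \neq M_2$. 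By Lemma~\ref{lem:injective_layer}, there is a choice of $Z$ for which $\psi_{AW,L}^{(l)}$ is injective in the AW multiset, so the common node embedding $v_i := \psi_{AW,L}^{(l)}(u)$ at any $u \in V(G_i)$ satisfies $v_1 \neq v_2$. Sum-pooling then gives $\Psi_{AW,L}^{(l)}(G_i) = 6\, v_i$, which also distinguishes the two graphs. Finally, Lemma~\ref{lem:concat} applied to the concatenation $\Psi_{AR,L}^{(l)} = [\Psi_{walk,L}^{(l)} \| \Psi_{AW,L}^{(l)}]$ transfers the separation to GSKN.

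The delicate point I expect is the interplay between node-level injectivity, which Lemma~\ref{lem:injective_layer} supplies, and graph-level sum-pooling, which in general could introduce cancellations across vertices. Exploiting the vertex-transitivity of $C_6$ and $2K_3$ sidesteps this obstacle cleanly: the pooled embedding is a positive scalar multiple of the single-node embedding, so equality of the pooled vectors would force equality of the node embeddings, contradicting injectivity. Should one prefer a pair that is not vertex-transitive, the same template still works by picking a landmark aligned with $R(\phi^{\star})$, which creates a uniform, non-cancelling positive gap at a single Nystr\"om coordinate across all vertices of the triangle-containing graph.
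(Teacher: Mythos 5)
Your proof is correct and follows essentially the same route as the paper: your pair $C_6$ versus $2K_3$ is exactly the $k=3$ instance of the paper's $R_{2k}$ versus $R_{k,k}$ family, with 1-WL fooled by $2$-regularity and the anonymous-walk block separating the graphs via a walk pattern (your triangle-closure AW $(0,1,2,0)$ versus the paper's ``label exceeding $k$'') that occurs in one graph but not the other. Your additional care about sum-pooling cancellation via vertex-transitivity, and your direct verification of 1-WL failure rather than citing prior work, only add rigor to the paper's argument.
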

\begin{proof}
For proving Lemma \ref{lem:strict} it suffices to provide such examples $G_1, G_2$ where 1-WL test deems them isomorphic and $\Psi_{AR, L}^{(l)}(G_1)\neq \Psi_{AR, L}^{(l)}(G_2)$. Rings would be one type of such examples, as shown in Fig. \ref{fig:1-wl-example}. \cite{damke2020novel} shows that 1-WL cannot distinguish between $R_{2k}$, a ring with $2k$ nodes and $R_{k, k}$, two disjoint rings with $k$ nodes each. 

We show that $\Psi_{AW, L}^{(l)}$ can distinguish such pairs of graphs, and $\Psi_{AR, L}^{(l)}$ follows. Since any walk on $R_{k, k}$ would pass at most $k$ distinct nodes, an arbitrary anonymous walk would have a label of at most $k$. Therefore, it is sufficient to take $l > k$, where on $R_{2k}$ anonymous walks with label larger than $k$ would be observed. 
\end{proof}

\begin{figure}[htbp]
\centering
        \includegraphics[width=\linewidth]{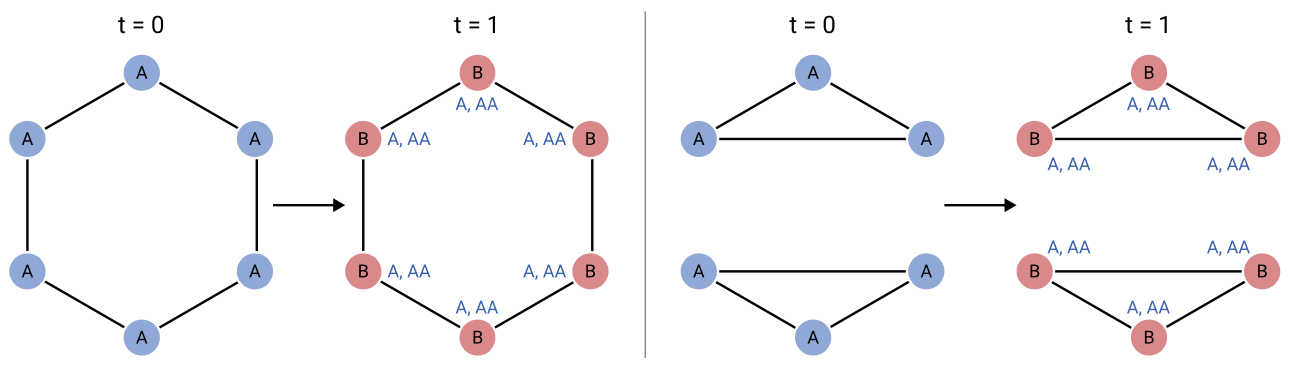}
        \caption{Two simple non-isomorphic graphs that are indistinguishable by 1-WL \cite{damke2020novel}. $t$ is the number of iterations.}
        \label{fig:1-wl-example}
\end{figure}

In addition, we extend our analysis to node-level viewpoints, and show that our model provably extends the WL subtree kernel (Eqn. \ref{eqn:wl-kernel}) in differentiating different rooted local subgraphs.  
\begin{theorem}
Given a graph $G = (V, E, X)$ and $u_1,  u_2 \in V$. Denote $\mathcal{M}(u_1, u_2)$ as the set of exact matchings between subsets of neighborhoods of $u_1$ and $u_2$ (formally defined in \cite{she2011weisfeiler}). For $u_1, u_2$ such that $|\mathcal{M}(u_1, u_2)| = 1$, the following inequality holds. 
\begin{equation}
\begin{aligned}
    \kappa_{subtree}^{(l)}(u_1, u_2) &=\delta\left(\psi_{walk}^{(l)}(u_1), \psi_{walk}^{(l)}(u_2)\right)\\
    &\ge \delta\left(\psi_{AR}^{(l)}(u_1), \psi_{AR}^{(l)}(u_2)\right)
    \end{aligned}
    \label{eqn:node}
\end{equation}
Moreover, there exists $G, u_1, u_2$ such that strict inequality holds. 
\label{thm:node}
\end{theorem}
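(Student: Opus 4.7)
The plan is to decompose the statement into three sub-claims and prove them in sequence: (i) the equality $\kappa_{subtree}^{(l)}(u_1,u_2) = \delta(\psi_{walk}^{(l)}(u_1),\psi_{walk}^{(l)}(u_2))$ under the unique-matching condition, (ii) the general inequality $\delta(\psi_{walk}^{(l)}(u_1),\psi_{walk}^{(l)}(u_2)) \ge \delta(\psi_{AR}^{(l)}(u_1),\psi_{AR}^{(l)}(u_2))$, and (iii) the existence of a triple $(G, u_1, u_2)$ realizing strict inequality. Step (ii) is essentially a direct invocation of Lemma~\ref{lem:concat}, step (iii) reuses the cycle construction underlying Lemma~\ref{lem:strict}, and the bulk of the work lies in step (i).

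For step (i), I would exploit the fact that when $|\mathcal{M}(u_1,u_2)|=1$, the depth-$l$ rooted subtree around $u_1$ admits at most one attribute-preserving matching with that of $u_2$, so by the matchings formulation of \cite{she2011weisfeiler} the value $\kappa_{subtree}^{(l)}(u_1,u_2)$ collapses to a $\{0,1\}$-indicator of whether the two rooted subtrees coincide. If they coincide, unrolling each subtree produces identical multisets of length-$l$ walks with identical attribute sequences, and by Lemma~\ref{lem:injective_layer} (instantiated on walks) we get $\psi_{walk}^{(l)}(u_1)=\psi_{walk}^{(l)}(u_2)$, so both sides equal $1$. If the subtrees disagree, then some discrepancy is witnessed along a walk that traverses the differing subtree branch, so the length-$l$ walk multisets differ; injectivity of $\psi_{walk}^{(l)}$ forces the two mappings apart and both sides equal $0$.

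For step (ii), applying Lemma~\ref{lem:concat} to $\psi_{AR}^{(l)}(u)=[\psi_{walk}^{(l)}(u)\|\psi_{AW}^{(l)}(u)]$ immediately yields that any disagreement in the $\psi_{walk}^{(l)}$ block induces a disagreement in $\psi_{AR}^{(l)}$, so $\psi_{AR}^{(l)}(u_1)=\psi_{AR}^{(l)}(u_2)$ is strictly stronger than $\psi_{walk}^{(l)}(u_1)=\psi_{walk}^{(l)}(u_2)$, which gives the required pointwise inequality on the Dirac indicators. For step (iii), I would lift the ring example from Lemma~\ref{lem:strict} to the node level by letting $G$ be the disjoint union of $R_k$ and $R_{2k}$ (equipped with uniform node attributes so that the unique-matching hypothesis is met) and picking $u_1\in R_k$ and $u_2 \in R_{2k}$. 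Under uniform attributes every length-$l$ walk has the same attribute sequence, hence $\psi_{walk}^{(l)}(u_1)=\psi_{walk}^{(l)}(u_2)$; but for $l\ge k$, walks from $u_1$ wrap around the short cycle and yield anonymous walks whose first and later positions coincide, a pattern unachievable from $u_2$ in the $R_{2k}$ component. Hence $\psi_{AW}^{(l)}(u_1)\neq \psi_{AW}^{(l)}(u_2)$ and the concatenation forces $\psi_{AR}^{(l)}(u_1)\neq \psi_{AR}^{(l)}(u_2)$, realizing strict inequality.

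The main obstacle I expect is step (i), specifically justifying that the integer-valued $\kappa_{subtree}^{(l)}$ collapses to a $\{0,1\}$-indicator under the unique-matching hypothesis and that the length-$l$ walk multisets faithfully witness the subtree comparison in both directions. Carefully aligning the matchings formalism of \cite{she2011weisfeiler} with the injectivity argument of Lemma~\ref{lem:injective_layer}, and making sure the "walk witnesses the mismatch" direction is valid at finite depth $l$ (and not merely asymptotically), is the delicate part; a secondary concern is verifying that the ring construction in (iii) simultaneously respects $|\mathcal{M}(u_1,u_2)|=1$, which may require a mild choice of $k$ or of node attributes to avoid introducing spurious automorphisms.
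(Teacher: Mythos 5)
Your decomposition matches the paper's: part (ii) is exactly the paper's appeal to Lemma~\ref{lem:concat}, and part (iii) is the paper's own construction (a disjoint union of rings $R_k$ and $R_{2k}$ with effectively uniform attributes, so that $\psi_{walk}^{(l)}$ cannot separate the two roots while $\psi_{AW}^{(l)}$ can for $l>k$). The divergence is in part (i): the paper does not prove the equality $\kappa_{subtree}^{(l)}(u_1,u_2)=\delta(\psi_{walk}^{(l)}(u_1),\psi_{walk}^{(l)}(u_2))$ at all --- it cites Theorem~1 of \cite{chen2020convolutional} --- whereas you attempt to re-derive it. Your sketch of that derivation has a genuine gap in the direction ``subtrees disagree $\Rightarrow$ the length-$l$ walk multisets differ.'' This implication is false in general: walk-label counts correspond to homomorphism counts from paths, while WL colours correspond to homomorphism counts from trees, so identical walk multisets do not generically force identical subtrees. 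The unique-matching hypothesis $|\mathcal{M}(u_1,u_2)|=1$ is precisely what rescues this direction, and it is exactly here that it must be invoked; your argument instead uses the hypothesis only to collapse $\kappa_{subtree}^{(l)}$ to a $\{0,1\}$ indicator and then asserts that ``some discrepancy is witnessed along a walk,'' which is the nontrivial content of the cited theorem, not a consequence of Lemma~\ref{lem:injective_layer}. You correctly flag this as the delicate step, but flagging it does not close it.

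Two smaller points. First, in part (iii) your stated witness --- anonymous walks ``whose first and later positions coincide'' --- is not by itself a valid separator, since backtracking walks on $R_{2k}$ also revisit the starting label; the robust separator (and the one the paper uses, mirroring Lemma~\ref{lem:strict}) is the presence of anonymous walks containing more than $k$ distinct labels, which occur from the root in $R_{2k}$ but not from the root in $R_k$. Second, your worry about whether the ring example satisfies $|\mathcal{M}(u_1,u_2)|=1$ is reasonable but immaterial for the strictness claim: strictness only requires the \emph{second} relation $\delta(\psi_{walk}^{(l)}(u_1),\psi_{walk}^{(l)}(u_2)) > \delta(\psi_{AR}^{(l)}(u_1),\psi_{AR}^{(l)}(u_2))$ to be realized, and that inequality holds unconditionally by Lemma~\ref{lem:concat} regardless of the matching hypothesis.
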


\noindent \textbf{Remarks.} Both the WL subtree kernel and the Delta function $\delta(\cdot, \cdot)$ return only 0 or 1, and therefore they are indicators of whether they deem local subgraphs rooted at $u_1$ and $u_2$ isomorphic. $\delta(\psi_1(u_1), \psi_1(u_2)) > \delta(\psi_2(u_1), \psi_2(u_2))$ indicates $\delta(\psi_1(u_1), \psi_1(u_2)) = 1$, while $\delta(\psi_2(u_1),$ $\psi_2(u_2)) = 0$, which indicates that $\psi_2$ is more powerful than $\psi_1$. 

\begin{proof}
The first equality is shown by Theorem 1 in \cite{chen2020convolutional}. The second inequality is a direct corollary of Lemma \ref{lem:concat}. 

To show that there exists $G, u_1, u_2$ such that strict inequality holds, we revisit the example in Fig. \ref{fig:1-wl-example}. Suppose $G$ consists of disjoint rings (i.e. 2-regular graphs), $u_1$ is in an $R_{2k}$, while $u_2$ is in an ($R_k$), and node attributes are node degrees $X(u) = \mathrm{deg}(u)$. Similar to Lemma \ref{lem:strict}, AWs with length $l>k$ would be sufficient such that $\delta\left(\psi_{AW}^{(l)}(u_1), \psi_{AW}^{(l)}(u_2)\right) = 0$. However, since node attributes and node degrees are all the same, the kernel mapping of the RWGK $\psi_{walk}^{(l)}$ will follow $\delta\left(\psi_{walk}^{(l)}(u_1),  \psi_{walk}^{(l)}(u_2)\right) = 1, \forall l$.  
\end{proof}

\begin{table*}[h]
  \centering
  \resizebox{\linewidth}{!}{\begin{tabular}{c c c c c c c c c c c c c}
  \toprule
  Datasets &
  {MUTAG} &
  {PROTEINS} &
  {PTC} &
  {IMDB-B} &
  {IMDB-M} & 
  {COLLAB} & 
  {BZR} & 
  {COX2} & 
  {PROTEINS\_full} &
  {Cora} &
  {Pubmed} &
  {PPI}
  \\
  \hline
  Task & Graph & Graph & Graph & Graph & Graph & Graph & Graph & Graph & Graph & Node & Node & Node\\
  Type & biochem & biochem & biochem & social & social & social & biochem & biochem & biochem & citation & citation & biochem\\
  \# Graphs & 188 & 1,113 & 344 & 1,000 & 1,500 & 5,000 & 405 & 467 & 1,113 & - & - & -  \\
  Avg \# node  & 18 & 39 & 26 & 20 & 13 & 74 & 36 & 41 & 39 & 2,708 & 19,717 & 14,755  \\
 Avg \# edge   & 20 & 73 & 51 & 97 & 66 & 2,458 & 38 & 43 & 73 & 5,429 & 44,338 & 228,431\\
  \# Class  & 2 & 2 & 2 & 2 & 3 & 3 & 2 & 2 & 2 & 7 & 3 & 121\\
  Attribute Type & Disc. & Disc. & Disc. & No & No & No & Cont. & Cont. & Cont. & Disc. & Disc. & Disc.\\
  Attr. Dim. & 1 & 1 & 1 & - & - & - & 3 & 3 & 29 & 1,433 & 500 & 50\\
  \bottomrule
    \end{tabular}}
    \caption{Dataset statistics for graph and node classification. \textit{Disc.} stands for discrete node attributes, while \textit{Cont.} stands for continuous node attributes.}
  \label{tab:datasets1}
\end{table*}

\section{Experiments}
In this section, we introduce our empirical evaluations on GSKN. We first introduce experimental settings, before presenting various experimental results. Specifically, our evaluations consist of: 
\begin{itemize}
    \item Qualitative evaluation, where we carry out experiments on synthetic datasets to illustrate properties of GSKN in an intuitive manner. 
    \item Quantitative evaluation, including graph classification and node classification on real-world graph datasets. 
    \item Self evaluation, including analysis on model components, model parameters, and model efficiency. 
\end{itemize}
\subsection{Experimental Setup}
We first introduce the datasets, comparison methods as well as settings for experimental evaluation. 

\noindent \textbf{Datasets} We take the following benchmark datasets for evaluation. 
\begin{itemize}
    \item \textbf{Graph Classification}. We use the same
benchmark datasets as in \cite{du2019graph}, including 6 biochemical network datasets (MUTAG, PROTEINS, PTC, BZR, COX2, PROTEINS\_full), and 3 social network datasets (IMDB-B, IMDB-M, and COLLAB). All biochemical network datasets have node attributes, with MUTAG, PROTEINS, PTC categorical and BZR, COX2, PROTEINS\_full continuous, while none of the social network datasets have node features. 
   \item \textbf{Node Classification}. We utilize citation networks (Cora, Pubmed \cite{kipf2016semi}), and the protein interaction network (PPI). 
\end{itemize}
Detailed dataset statistics and properties are listed in Table \ref{tab:datasets1}.

\noindent \textbf{Baselines} 
We take the following models as baselines for graph classification. 
\begin{itemize}
    \item \textbf{LDP} \cite{cai2018simple} serves as a simple baseline based on degree information only. 
    \item \textbf{Graph Kernels}, including the
    WL subtree kernel \cite{she2011weisfeiler}, the Shortest Path (SP) Kernel \cite{borgwardt2005shortest}, the RW kernel,  GNTK \cite{du2019graph}, RetGK \cite{zhang2018retgk}, DDGK\cite{al2019ddgk}, hashing graph kernels (HGK-WL, HGK-SP) \cite{morris2016faster}, Graph2Vec \cite{narayanan2017graph2vec}, and WWL \cite {togninalli2019}.
    \item \textbf{GNNs}, including PatchySAN \cite{niepert2016learning}, GraphSAGE \cite{hamilton2017inductive}, GCN \cite{kipf2016semi}, GIN \cite{xu2018powerful}. 
    \item \textbf{AW based methods. } We choose AWE \cite{ivanov2018anonymous} and GraphSTONE \cite{long2020graph}. AWE is based on skip-gram optimization on AW while GraphSTONE is an AW based GNN. 
    \item \textbf{Graph Kernel Networks, } i.e. GCKN. We choose GCKN-Path, the strongest variant mentioned in \cite{chen2020convolutional}. 
\end{itemize}
For node classification, we take GNNs as baselines, including GCN, GraphSAGE, GAT, GIN, GraLSP \cite{JinSS20} and GraphSTONE. Among them, GraLSP and GraphSTONE also adopt AW to model substructure information. 

\noindent \textbf{Hyperparameter Settings} We take 64-dimensional embeddings for all methods. The parameter settings of other baselines follow the recommended settings in relevant codes. 
For GSKN, we take $\alpha = 1.5$ in Eqn. \ref{eqn:gaussian} and \ref{eqn:gaussian-sigma}, and choose anonymous walk length to be 5 or 6 depending on each dataset. Since the embeddings of GSKN come from the concatenation of kernel mappings of both AWGK and RWGK, we set the embedding size of each part as 32. 



\subsection{Synthetic Experiments for Proof-of-concept}
\begin{figure}[ht]
    \centering
    \subfigure[Cycle]{\includegraphics[width=0.23\columnwidth]{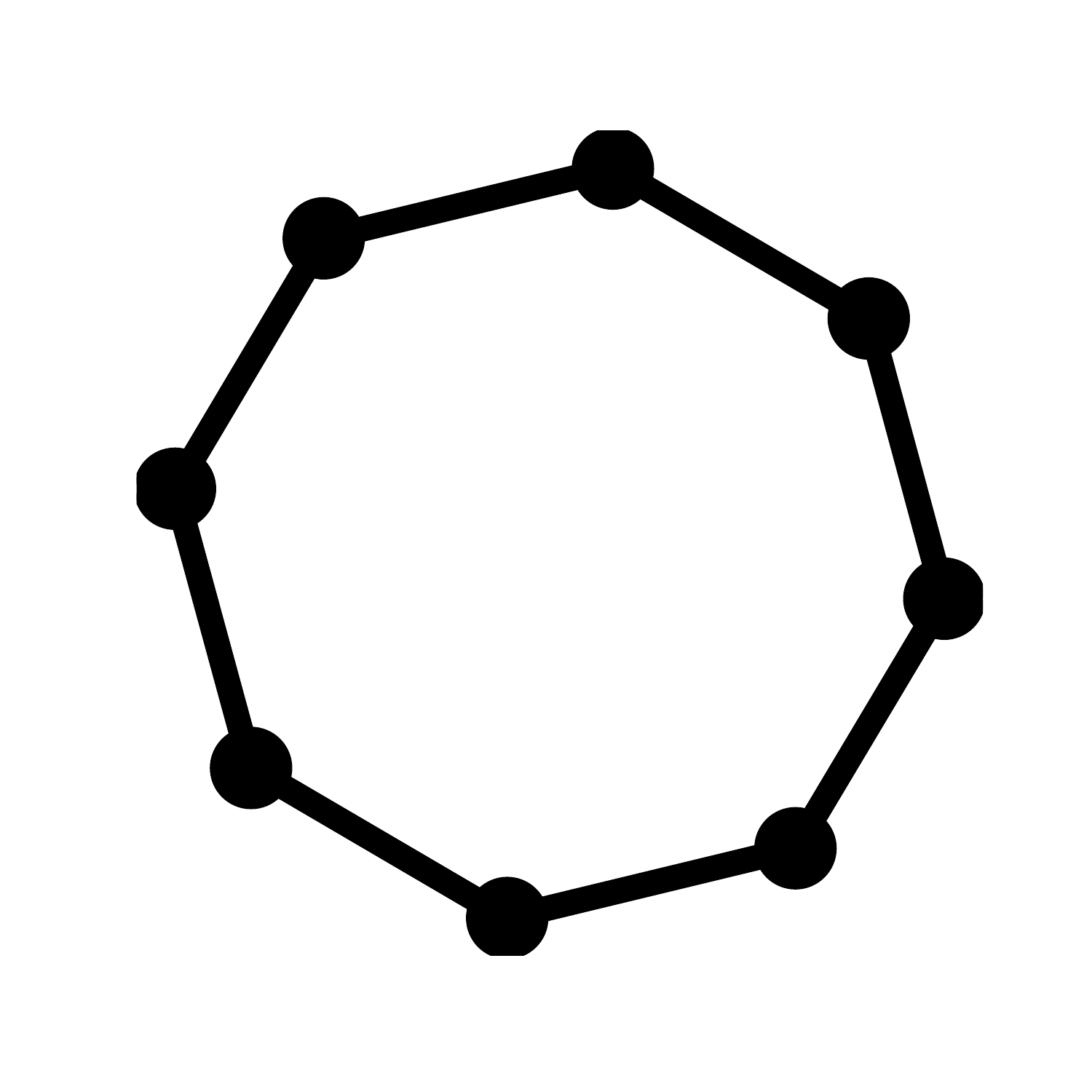}}
    \subfigure[Wheel]{\includegraphics[width=0.23\columnwidth]{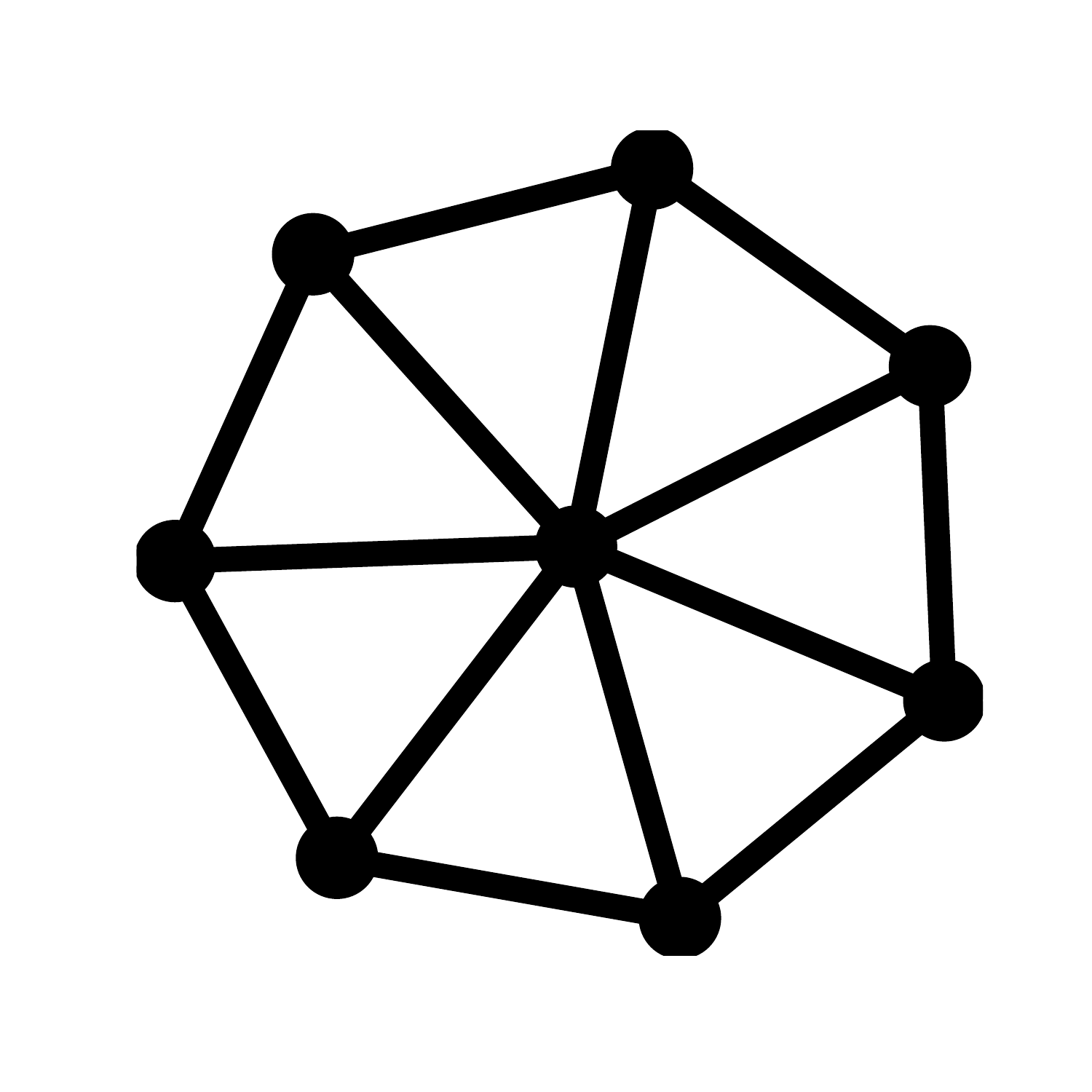}}
    \subfigure[Path]{\includegraphics[width=0.23\columnwidth]{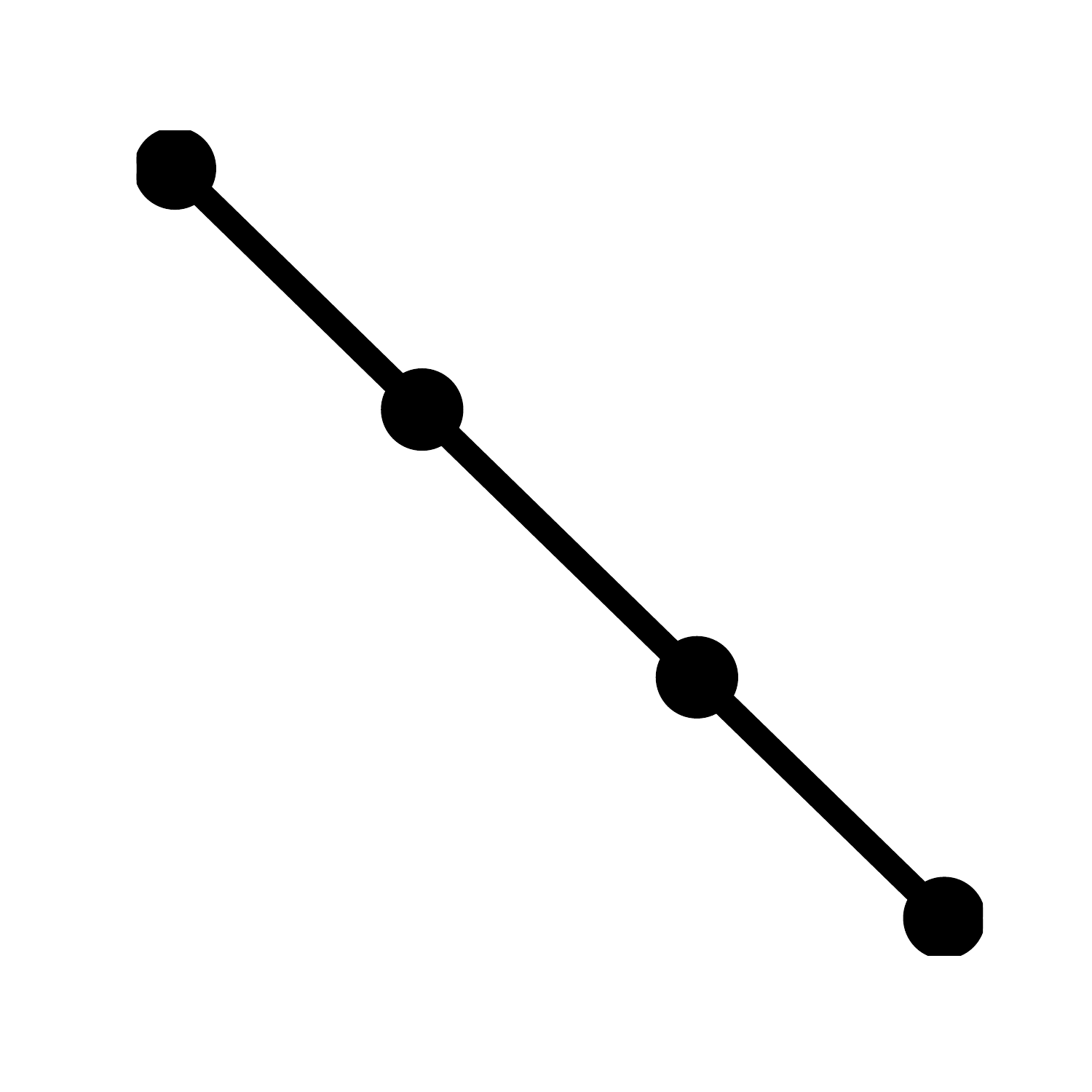}}
    \subfigure[Ladder]{\includegraphics[width=0.23\columnwidth]{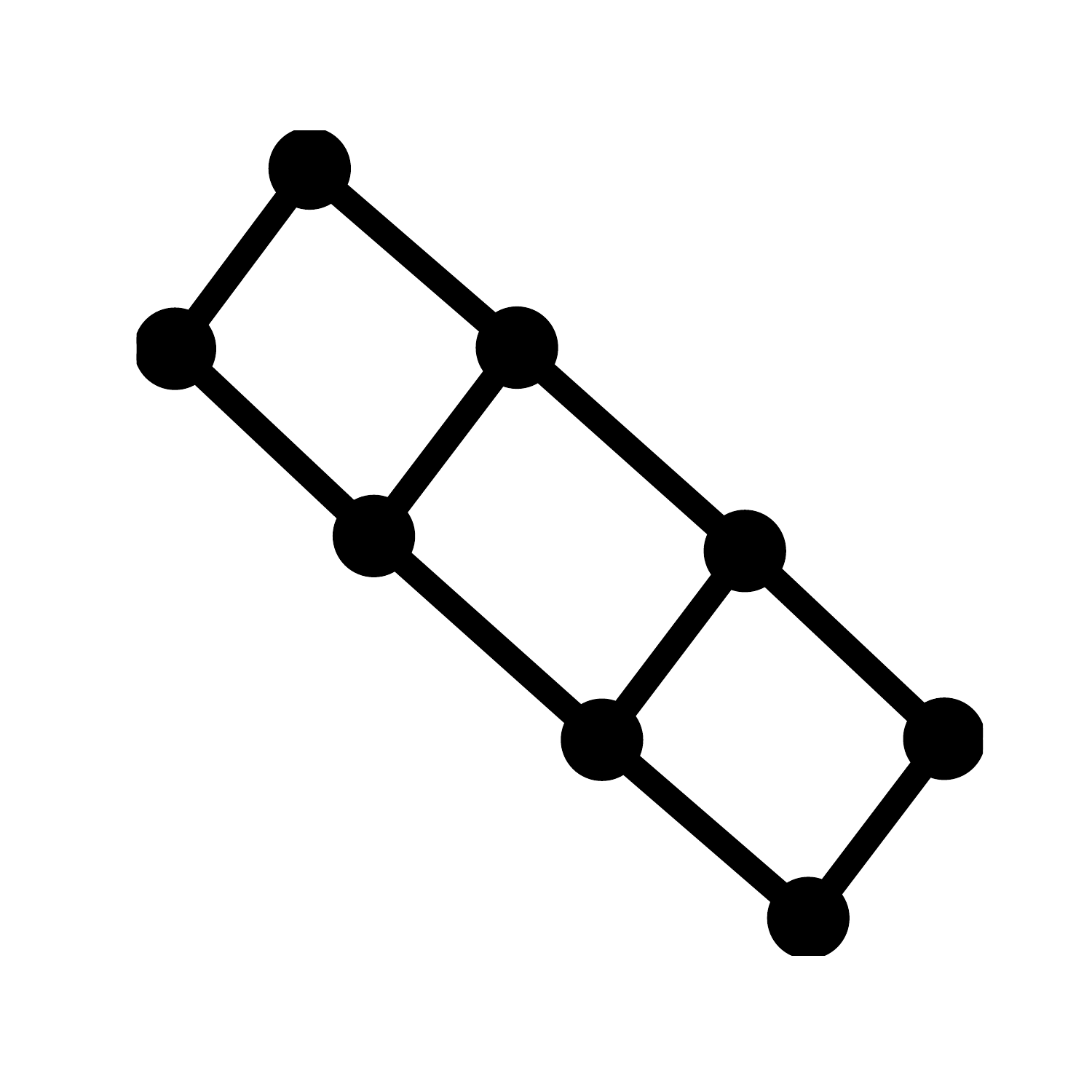}}
    \caption{Illustration of the 4 types of graphs in synthetic dataset \textit{Structure}.}
    \label{fig:illus_structure}
\end{figure}

\begin{table}[ht]
  \centering
  \begin{tabular}{ccccccc}
  \toprule
  Model & Cycle & Wheel & Path & Ladder & Avg \\
  \midrule
  RW Kernel & 62.6 & \textbf{100.0} & 74.3 & \textbf{100.0} & 84.8 \\
  WL Kernel & 55.2 & 98.5 & 65.5 & 98.5 & 79.7 \\
  SP Kernel & 61.9 & 100.0 & 43.8 & 92.2 & 75.5 \\
  GIN & - & - & - & - & 90.0\\
  GCKN & 95.0 & \textbf{100.0} & 94.9 & \textbf{100.0} & 97.5  \\
  GSKN & \textbf{99.5} & \textbf{100.0} & \textbf{99.5} & \textbf{100.0} & \textbf{99.7}  \\
  \bottomrule
    \end{tabular}
    \caption{Graph classification results on synthetic dataset \textit{Structure}.  GIN uses supervised training, leaving accuracy on each type of graphs not comparable.}
  \label{tab:sys_result1}
\end{table}

\begin{table}[ht]
  \centering
  \begin{tabular}{cccc}
  \toprule
  Model & Accuracy on \textit{Regular} \\
  \midrule
  GCKN & 50.0   \\
  GSKN & \textbf{100.0} \\
  \bottomrule
    \end{tabular}
    \caption{Graph classification results on synthetic dataset \textit{Regular}.}
  \label{tab:sys_result2}
\end{table}

\begin{table*}[t]
  \centering
  \begin{tabular}{cccccccc}
  \toprule
  Dataset & MUTAG & PROTEINS & PTC & IMDB-B & IMDB-M & COLLAB\\
 \midrule
  LDP$ ^*$ & 88.9 $\pm$ 9.6 & 73.3 $\pm$ 5.7 & 63.8 $\pm$ 6.6 & 68.5 $\pm$ 4.0 & 42.9 $\pm$ 3.7 & 76.1 $\pm$ 1.4\\
   \midrule
  GCN$ ^*$ & 85.6 $\pm$ 5.8 & 76.0 $\pm$ 3.2 & 64.2 $\pm$ 4.3 & 74.0 $\pm$ 3.4 & 51.9 $\pm$ 3.8 & 79.0 $\pm$ 1.8\\
  PatchySAN$ ^*$ & 92.6 $\pm$ 4.2 & 75.9 $\pm$ 2.8 & 60.0 $\pm$ 4.8 & 71.0 $\pm$ 2.2 & 45.2 $\pm$ 2.8 & 72.6 $\pm$ 2.2 \\
  GIN$ ^*$ & 89.4 $\pm$ 5.6 & 76.2 $\pm$ 2.8 & 64.6 $\pm$ 7.0 & 75.1 $\pm$ 5.1 & 52.3 $\pm$ 2.8 & 80.2 $\pm$ 1.9\\
  Graph2Vec & 83.1 $\pm$ 6.5 & 73.9 $\pm$ 2.6 & 59.6 $\pm$ 5.9 & 51.1 $\pm$  4.0 & 47.4 $\pm$ 3.1 & 52.6 $\pm$ 1.9 \\
  \midrule
  WL subtree$ ^*$ & 90.4 $\pm$ 5.7 & 75.0 $\pm$ 3.1 & 59.9 $\pm$ 4.3 & 73.8 $\pm$ 3.9 & 50.9 $\pm$ 3.8 & 78.9 $\pm$ 1.9 \\
  RetGK$ ^*$ & 90.3 $\pm$ 1.1 & 75.8 $\pm$ 0.6 & 62.5 $\pm$ 1.6 & 71.9 $\pm$ 1.0 & 47.7 $\pm$ 0.3 & 81.0 $\pm$ 0.3\\
  GNTK$ ^*$ & 90.0 $\pm$ 8.5 & 75.6 $\pm$ 4.2 & 67.9 $\pm$ 6.9 & 76.9 $\pm$ 3.6 & 52.8 $\pm$ 4.6 & \textbf{83.6} $\pm$ 1.0 \\
  DDGK & 85.0 $\pm$ 1.1 & 72.3 $\pm$ 4.7 & 65.7 $\pm$ 6.3 & 63.2 $\pm$ 4.5 & 49.1 $\pm$ 2.9 & 75.4 $\pm$ 1.8\\
  \midrule
  AWE & 88.9 $\pm$ 6.4 & 71.5 $\pm$ 4.9 & 63.8 $\pm$ 8.5 & 73.1 $\pm$ 4.1 & 53.1 $\pm$ 3.9 & 73.6 $\pm$ 2.3\\
  GraphSTONE & 67.2 $\pm$ 8.9 & 59.1 $\pm$ 4.4 & 55.6 $\pm$ 5.9 & 57.2 $\pm$ 2.7 & 35.9 $\pm$ 3.2 & 55.7 $\pm$ 2.4 \\
  \midrule
  GCKN$ ^*$ & 92.8$\pm$ 6.1 & 76.0 $\pm$ 3.4 & 67.3 $\pm$ 5.0 & 75.9 $\pm$ 3.7 & 53.0 $\pm$ 3.1 & 82.3 $\pm$ 1.1 \\
  GSKN & \textbf{93.3} $\pm$ 3.3 & \textbf{82.3} $\pm$  2.4 & \textbf{85.2} $\pm$ 5.1 & \textbf{79.9} $\pm$ 3.3 & \textbf{59.3} $\pm$ 3.1 & 81.8 $\pm$ 1.1 \\
  \bottomrule
    \end{tabular}
    \caption{Graph classification results on datasets without continuous node attributes. Results with * are taken from \cite{du2019graph}.}
  \label{tab:graph_classification}
\end{table*}

\begin{table*}[t]
\centering
\begin{adjustbox}{max width=0.95\linewidth}
\begin{tabular}{c c c c c c c c c c c c c c c}
\toprule
\multirow{4}{*}{Model} &
\multicolumn{4}{c}{Cora} &
\multicolumn{4}{c}{Pubmed} & 
\multicolumn{4}{c}{PPI}&
\\
\cmidrule(lr){2-5} 
\cmidrule(lr){6-9} 
\cmidrule(lr){10-13}
& 
\multicolumn{2}{c}{Macro-f1}& 
\multicolumn{2}{c}{Micro-f1}& 
\multicolumn{2}{c}{Macro-f1}& 
\multicolumn{2}{c}{Micro-f1}& 
\multicolumn{2}{c}{Macro-f1}& 
\multicolumn{2}{c}{Micro-f1}& 
\\
\cmidrule(lr){2-3} 
\cmidrule(lr){4-5}
\cmidrule(lr){6-7}
\cmidrule(lr){8-9}
\cmidrule(lr){10-11}
\cmidrule(lr){12-13}
& 
30\% & 70\% & 30\% & 70\% & 30\% & 70\% & 30\% & 70\% & 30\% & 70\% & 30\% & 70\% 
\\
\midrule
GCN$ ^*$ & 79.84 & 81.09 & 80.97 & 81.94 & 76.93 & 77.21 & 76.42 & 77.49 & 12.57 & 12.62 & 40.40 & 40.44  \\
GAT$ ^*$ & 79.33 & 82.08 & 80.41 & 83.43 & 76.94& 76.92  & 77.64& 77.82 &  11.91 & 11.97 & 39.92 & 40.10 \\
GraphSAGE$ ^*$ & 80.52 & 81.90 & 82.13 & 83.17 & 76.61 & 77.24  & 77.36 & 77.84 & 11.81 & 12.41 & 39.80 &40.08  \\
GraLSP$ ^*$ & 82.43 & 83.27 & 83.67 & 84.31 & \textbf{81.21} & 81.38  & \textbf{81.43} & 81.52 & 11.34 & 11.89 & 39.55 & 39.80 \\
GraphSTONE$ ^*$ & \textbf{82.78}& \textbf{83.54} & \textbf{83.88}& \textbf{84.73} & 78.61 & 78.87 & 79.53 & 81.03 & 15.55 & 15.91 & 43.60 & 43.64
\\
GCKN & 77.35 & 78.74 & 78.54 & 79.52 & 79.33 & 81.03 & 79.38 & 81.10 & 15.19 & 16.15 & 40.79 & 41.07 \\
GSKN & 77.44 & 79.32 & 78.87 & 80.55 & 80.06 & \textbf{81.98} & 80.26 & \textbf{82.13} & \textbf{19.19} & \textbf{20.06} & \textbf{43.71} & \textbf{43.82} \\
\midrule
Std (GSKN) / \% & 0.003 & 0.017 & 0.004 & 0.017  & 0.002 & 0.005 & 0.003 & 0.005 & 0.001 & 0.002 & 0.001 & 0.002 \\
\bottomrule
\end{tabular}
\end{adjustbox}
\caption{Macro-f1 and Micro-f1 scores of node classification. Results with * are taken from \cite{long2020graph}}
  \label{tab:node_classification}
\end{table*}

\begin{table}[t]
  \centering
  \begin{tabular}{cccc}
  \toprule
  Model &  BZR & COX2 & PROTEINS\_full  \\
  \midrule
  HGK-WL$ ^*$ & 78.6 $\pm$ 0.6 & 78.1 $\pm$ 0.5 & 75.9 $\pm$ 0.2 \\
  HGK-SP$ ^*$ & 76.4 $\pm$ 0.7 & 72.6 $\pm$ 1.2 & 75.8 $\pm$ 0.2 \\
  WWL$ ^*$ & 84.4 $\pm$ 2.0 & 78.3 $\pm$ 0.5 & 77.9 $\pm$ 0.8\\
  GNTK$ ^*$ & 85.5 $\pm$ 0.8 & 79.6 $\pm$ 0.4 & 75.7 $\pm$ 0.2 \\ 
  \midrule
  GCKN$ ^*$ & \textbf{85.9} $\pm$ 0.5 & 81.2 $\pm$ 0.8 & 76.3 $\pm$ 0.5 \\
  GSKN & 85.3 $\pm$ 0.4 & \textbf{82.3} $\pm$ 0.6 & \textbf{81.9} $\pm$ 0.3 \\
  \bottomrule
    \end{tabular}
    \caption{Graph classification results on datasets with continuous node attributes. Results with * are taken from \cite{du2019graph}.}
  \label{tab:graph_classification2}
\end{table}

To evaluate the expressiveness of GSKN against existing GKs and GNNs in a straightforward manner, we carry out experiments on two synthetic datasets with distinct structures as a simple proof-of-concept. As mentioned in Section \ref{sec:theory}, our model is theoretically stronger than the 1-WL test, and hence all MPGNNs. 

In the first dataset \textit{Structure}, we generate graphs with 4 different structures, including Cycle, Path, wheel, and Ladder. We show illustrations of these structures in Fig. \ref{fig:illus_structure}. We generate graph instances $g$ for each class with their sizes $n \sim\mathrm{Uniform}(20, 80)$. We also randomly add 10\% edges in the generated graphs $g$ to incorporate some noise. We generate 100 graphs for each structure. 

In the other synthetic dataset \textit{Regular}, we generate 100 5-regular graphs with two types, one being connected 5-regular graphs with 20 nodes, \textit{Reg-20-5}, and the other being two disjoint 5-regular graphs with 10 nodes each, \textit{2-Reg-10-5}. As stated in \cite{sato2020survey}, the 1-WL test, and hence MPGNNs cannot distinguish among different $d$-regular graphs. 

We compare our model with classical graph kernel methods on \textit{Structure}, including the RW kernel, the WL kernel, and the SP kernel to show the general ability to distinguish among graph structures. We also compare against GNNs, including GIN and GCKN. For the WL kernel, we use degrees as node attributes following \cite{xu2018powerful}. Results are shown in Table \ref{tab:sys_result1}. It can be shown that although all methods are almost 100\% correct on Wheels and Ladders, the performances differ a lot on Cycle and Path. Among all the comparison methods, GSKN is able to outperform GCKN, the second strongest baseline by 5\% on Cycle and Path, showing its better ability in distinguishing among different structures in general. 

We further show that GSKN complements GNNs using \textit{Regular}, a dataset where ordinary GNNs would be theoretically incapable. We show the results in Table \ref{tab:sys_result2}. It can be shown that GCKN can do no better than guessing, while GSKN achieves 100\% accuracy, showing the improvement of GSKN over GNNs.

\subsection{Quantitative Evaluations}
In this section, we present results on graph and node classification. 
\subsubsection{Graph Classification}
We follow the same experimental protocols as \cite{du2019graph,xu2018powerful}, and report the average accuracy and standard deviation over a 10-fold cross-validation on each dataset. We use the same data splits as \cite{xu2018powerful} using their code. For classification, we use the SVM implementation of the Cyanure toolbox \footnote{http://julien.mairal.org/cyanure/}.

We show the graph classification results in Table \ref{tab:graph_classification} and \ref{tab:graph_classification2}. Results marked with * are taken from \cite{du2019graph}. We make the following findings. 
\begin{itemize}
    \item GSKN outperforms all baselines in all datasets except COLLAB. Specifically, GSKN achieves 6\% improvement on PROTEINS and IMDB-M, and over 10\% improvement on PTC. These improvements demonstrate the strength of GSKN in terms of graph classification. 
    \item Compared with GCKN, our model is also able to achieve improvements on all datasets except COLLAB. Since GSKN extends GCKN, the improvements endorse the effectiveness of the AWGK, and its compatibility with GCKN. 
    \item Compared with AWE and GraphSTONE, both of which use anonymous walks for graph representation, GSKN is also able to perform better. Specifically, AWE does not leverage node attributes, which limits its performance. GraphSTONE requires learning a topic model over AWs, which may require large graphs to ensure its effectiveness. By comparison, GSKN alleviates both drawbacks and achieves improvements with adequate modeling of anonymous walks, i.e. through the AWGK and its kernel mapping. 
\end{itemize}

We also show the classification results on graphs with continuous node attributes in Table \ref{tab:graph_classification2}. On these datasets, GSKN also performs competitively (similar to GCKN and GNTK on BZR, 1\% and 4\% over the best baseline on COX2 and PROTEINS\_full). We conclude that on a wide range of graph datasets can GSKN perform favorably against competitive baselines. 

\subsubsection{Node Classification}
We then carry out node classification experiments. We follow the same experimental settings as \cite{long2020graph}. Different fractions of nodes are sampled randomly for testing, leaving the rest for training. We use Logistic Regression as the classifier and take the macro and micro F1 scores for evaluation. The results are averaged over 10 independent runs.

The results are shown in Table \ref{tab:node_classification}. We make the following findings. 
\begin{itemize}
    \item GSKN achieves a 3-4\% improvement on PPI over the best baseline (GraphSTONE), a similar performance as the best baseline (GraLSP) on Pubmed, and a 1\% improvement over GCKN on Cora. Note that GCKN and GSKN do not use node-level supervision (e.g. node labels, edges), which may explain why both of them fail on Cora. 
    \item Compared with GCKN, GSKN achieves a 1\% improvement on Cora, a 1\% improvement on Pubmed, and a 3-4\% improvement on PPI. Since GCKN and GSKN are similarly trained, such improvements endorse the effectiveness of the kernel mapping of AWGK, and hence GSKN. As the node features are richer on Cora (1433 dims) and Pubmed (500 dims) than on PPI (50 dims), the additional structural information plays more significant roles on PPI than on Pubmed and Cora. Consequently, the improvements of GSKN are more evident on PPI than on Cora and Pubmed. 
\end{itemize}

\subsection{Model Analysis}
We carry out model analysis, including ablation studies, efficiency, and parameter analysis to provide better understandings of GSKN. We use graph classification to reflect the performances of the model.

\subsubsection{Ablation Studies}
Corresponding to the theoretical analysis in Section \ref{sec:theory}, we carry out ablation studies to illustrate the strength of the AWGK kernel mapping $\Psi_{AW}^{(l)}$ by modifying input features and switching model components. We use different variants of our model, and run experiments on MUTAG, PROTEINS, and PTC. 

We use four variants of our model. 
\begin{itemize}
    \item RW (degree) denotes the $\Psi_{walk, L}^{(l)}$ part with node degrees as input attributes. For graph classification, using node degrees as input attributes is a common practice, such as \cite{xu2018powerful}. 
    \item AW denotes the $\Psi_{AW, L}^{(l)}$ part only. 
    \item RW (attributes) denotes the $\Psi_{walk, L}^{(l)}$ part with node attributes as inputs. 
    \item RW (attributes) + AW denotes $\left[\Psi_{walk, L}^{(l)}\LARGE\|\Psi_{AW, L}^{(l)}\right]$, which is our full model formulation. 
\end{itemize}
The results are shown in Table \ref{tab:analysis_theory}. It can be shown that $\Psi_{AW, L}^{(l)}$ alone is a competitive graph classifier, outperforming RW (attribute). Moreover, by adding AW to RW (attribute), performances are improved by 2\% on MUTAG, 8\% on PROTEINS, and $>$ 20\% on PTC, showing the strength and compatibility of $\Psi_{AW, L}^{(l)}$ to $\Psi_{walk, L}^{(l)}$.

\subsubsection{Efficiency}
We compare the efficiency of GSKN with several GK baselines: the RW Kernel, the WL Kernel, DDGK, and AWE. We also compare the efficiency of GSKN against GCKN. We train all models and report the time needed for convergence on a single machine equipped with a GPU with 12GB RAM. 

Results are presented in Fig.\ref{tab:efficiency}. It can be shown that GSKN is comparable with the most efficient graph kernels, e.g. the WL Kernel and Graph2Vec, and takes less than three times the time of GCKN. The results demonstrate that our model is highly efficient and thus scalable. By contrast, although AWE takes similar approaches (i.e. anonymous walks) as GSKN, GSKN is over 200 times more efficient than AWE, while being more accurate as shown in Table \ref{tab:graph_classification}. 

\begin{table}[t]
  \centering
  \begin{tabular}{cccccc}
  \toprule
  Model & MUTAG & PROTEINS & PTC \\
  \midrule
  RW (degree) & 86.7 & 71.8 & 58.5 \\
  AW & 93.3 & 76.5 & 81.7 \\
  RW (attribute)  & 91.1 & 74.5 & 61.5 \\
  RW (attribute) + AW & 93.3 & 82.3 & 85.2 \\
  \bottomrule
    \end{tabular}
    \caption{Graph classification results with different model components and input features.}
  \label{tab:analysis_theory}
\end{table}

\begin{table}[t]
  \centering
  \begin{tabular}{cccc}
  \toprule
  Model &  MUTAG & IMDB-B & COLLAB \\
  \midrule
  RW Kernel & 0.9 & 55.8 & > 48h \\
  WL Kernel & 0.3 & 1.2 & 4.7 \\
  Graph2Vec & 0.5 & 1.5 & 5.4 \\
  \midrule
  DDGK & 70.2 & 360.1 & 2400.7 \\
  AWE & 348.6 & 1824.5 & > 48h \\
  GCKN & 1.1 & 1.3 & 6.5 \\
  \textbf{GSKN} & 1.1 & 1.9 & 15.3\\
  \bottomrule
    \end{tabular}
    \caption{Running time in minutes on different datasets.}
  \label{tab:efficiency}
\end{table}

\begin{figure*}[ht]
\centering
         \subfigure[Anonymous walk length $l$]{ \includegraphics[width=0.23\linewidth]{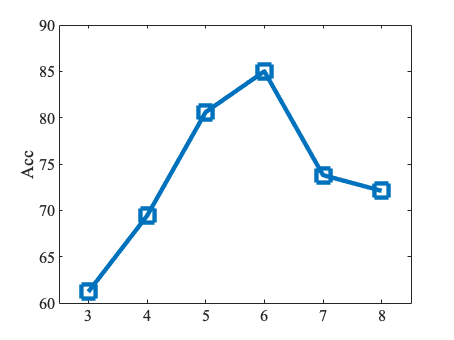}}
         \subfigure[Number of anonymous walks per node $|\Phi^l(G, u)|$]{ \includegraphics[width=0.23\linewidth]{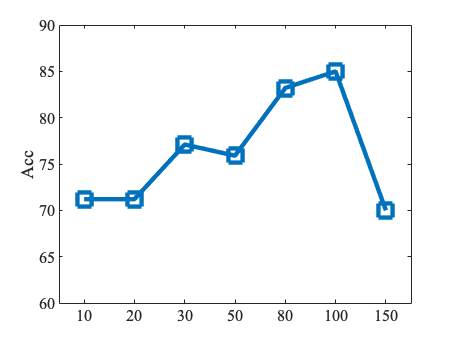}}
         \subfigure[$\alpha$ in Eqn. \ref{eqn:gaussian}]{ \includegraphics[width=0.23\linewidth]{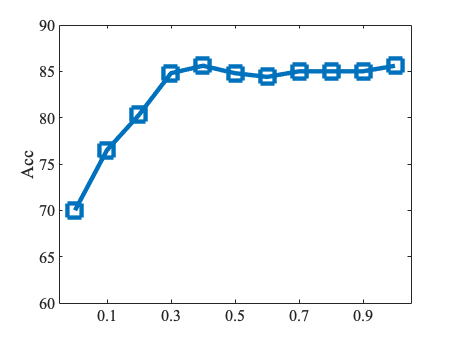}}
         \subfigure[Embedding size]{ \includegraphics[width=0.23\linewidth]{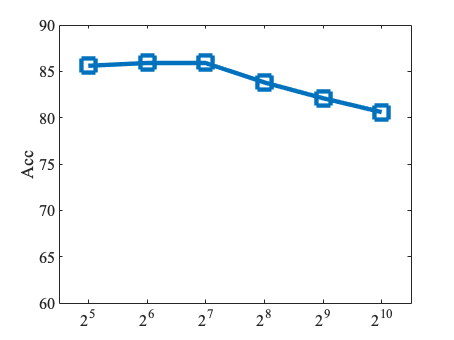}}
\caption{Parameter analysis of GSKN on PTC. }
\label{fig:para_ptc}
\end{figure*}
\begin{figure*}[ht]
\centering
         \subfigure[Anonymous walk length $l$]{ \includegraphics[width=0.23\linewidth]{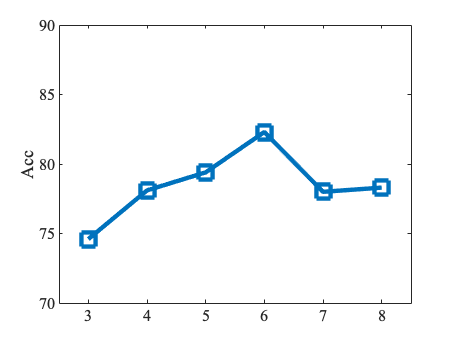}}
         \subfigure[Number of anonymous walks per node $|\Phi^l(G, u)|$]{ \includegraphics[width=0.23\linewidth]{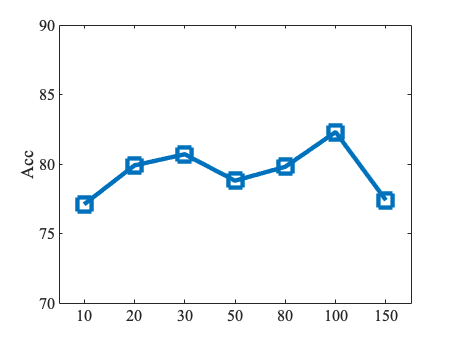}}
         \subfigure[$\alpha$ in Eqn. \ref{eqn:gaussian}]{ \includegraphics[width=0.23\linewidth]{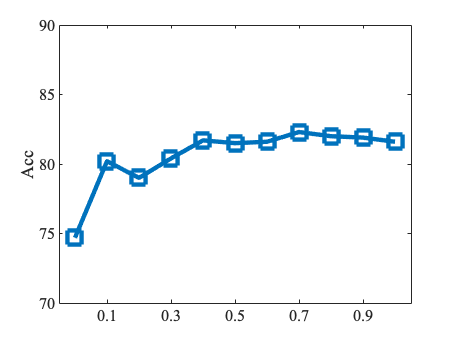}}
         \subfigure[Embedding size]{ \includegraphics[width=0.23\linewidth]{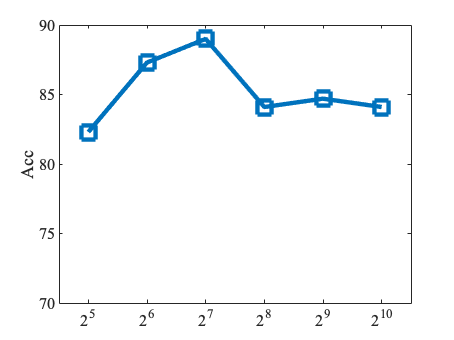}}
\caption{Parameter analysis of GSKN on PROTEINS.}
\label{fig:para_proteins}
\end{figure*}

\subsubsection{Parameter Analysis}
We analyze four parameters in our model, anonymous walk length $l$, the number of anonymous walks sampled per node $|\Phi^{l}(G, u)|$, the bandwidth parameter $\alpha$ of the Gaussian kernel in Eqn. \ref{eqn:gaussian}, and the embedding size. We conduct experiments on PTC and PROTEINS, whose results are shown in Fig. \ref{fig:para_ptc} and \ref{fig:para_proteins} respectively. We make the following findings: 
\begin{itemize}
    \item Both parameters related to the $\Psi_{AW, L}^{(l)}$, i.e. $l$ and $|\Phi^l(G, u)|$ need to be chosen appropriately instead of choosing arbitrarily large values. Specifically, $l=6$ and $|\Psi^l(G, u)| = 30$ would be most appropriate, while excessively large values may compromise the performance by as much as 10\%. 
    \item Larger $\alpha$s in the Gaussian kernel (Eqn. \ref{eqn:gaussian}) lead to better performance. Similar results are found in \cite{chen2020convolutional}, where they attribute the results to the fact that larger $\alpha$ leads to a closer resemblance between the Gaussian kernel and the Dirac kernel. 
    \item Larger embedding size does not necessarily lead to better performance. For GSKN, a size of 64 is appropriate.
\end{itemize}

\section{Conclusion}
In this paper, we propose GSKN, a GNN model with theoretically stronger ability than MPGNNs to distinguish graph structures. Specifically, we design our model based on anonymous walks (AWs), flexible substructure units, and derive it upon graph kernels (GKs), with efficient Nystr{\"o}m methods for computation. We theoretically demonstrate the stronger ability of GSKN to distinguish graph structures from both graph-level and node-level viewpoints. Correspondingly, both graph and node classification experiments are leveraged to evaluate our model, where our model outperforms a wide range of baselines, endorsing the analysis. 

\begin{acks}
This work was supported by the National Natural Science Foundation of China (Grant No. 61876006 and No. 61572041).
\end{acks}

\bibliographystyle{ACM-Reference-Format}
\bibliography{www21}
\end{document}